\documentclass{article}

\usepackage[preprint]{neurips_2026} 
\usepackage[utf8]{inputenc} 
\usepackage[T1]{fontenc}    
\usepackage{hyperref}       
\usepackage{url}            
\usepackage{amssymb}
\usepackage{algorithm}      
\usepackage{algpseudocode}
\usepackage{booktabs}       
\usepackage{amsfonts}       
\usepackage{nicefrac}       
\usepackage{microtype}      
\usepackage{xcolor}         

\usepackage{float}
\usepackage{amsmath} 
\usepackage{amsthm}
\usepackage{multirow}
\usepackage{graphicx}
\usepackage{pifont}
\usepackage{subcaption}
\usepackage{ragged2e}
\usepackage{arydshln}
\usepackage{colortbl}
\usepackage{adjustbox}

\definecolor{bestblue}{HTML}{A3C1DA} 

\newcommand{\first}[1]{\cellcolor{bestblue!60}\textcolor{black}{#1}}

\newcommand{\second}[1]{\cellcolor{bestblue!25}\textcolor{black}{#1}}

\graphicspath{ {./images/} }

\newcommand{\cmark}{\ding{51}}%
\newcommand{\xmark}{\ding{55}}%

\newcommand{\cN}{\mathcal{N}}
\newcommand{\cB}{\mathcal{B}}

\newcommand{\D}{\mathcal{D}}
\newcommand{\Dphy}{\mathcal{D}_{\mathrm{phy}}}
\newcommand{\Dic}{\mathcal{D}_{\mathrm{ic}}}
\newcommand{\Dbc}{\mathcal{D}_{\mathrm{bc}}}
\newcommand{\Dlab}{\mathcal{D}_{\mathrm{label}}}

\newcommand{\cL}{\mathcal{L}}
\newcommand{\Lphy}{\mathcal{L}_{\mathrm{phy}}}
\newcommand{\Lic}{\mathcal{L}_{\mathrm{ic}}}
\newcommand{\Lbc}{\mathcal{L}_{\mathrm{bc}}}
\newcommand{\Llab}{\mathcal{L}_{\mathrm{label}}}
\newcommand{\Ldata}{\mathcal{L}_{\mathrm{data}}}

\newcommand{\lphy}{\ell_{\mathrm{phy}}}
\newcommand{\lic}{\ell_{\mathrm{ic}}}
\newcommand{\lbc}{\ell_{\mathrm{bc}}}
\newcommand{\llab}{\ell_{\mathrm{label}}}
\newcommand{\lab}{\mathrm{label}}
\newcommand{\phy}{\mathrm{phy}}
\newcommand{\ic}{\mathrm{ic}}
\newcommand{\bc}{\mathrm{bc}}

\newcommand{\wphy}{w_{\mathrm{phy}}}
\newcommand{\wic}{w_{\mathrm{ic}}}
\newcommand{\wbc}{w_{\mathrm{bc}}}
\newcommand{\wdata}{w_{\mathrm{data}}}
\newcommand{\diag}{\operatorname{diag}}

\newtheorem{remark}{Remark}
\newtheorem{theorem}{Theorem}
\newtheorem{lemma}{Lemma}

\newtheorem{assumption}{Assumption}

\allowdisplaybreaks
\title{AdamFLIP: Adaptive Momentum Feedback Linearization Optimization for Hard Constrained PINN Training
}

\author{
  Binghang Lu\textsuperscript{1}\thanks{Equal contribution.} \quad
  Runyu Zhang\textsuperscript{2}\footnotemark[1] \quad
  Changhong Mou\textsuperscript{3} \quad
  Na Li\textsuperscript{4} \quad
  Guang Lin\textsuperscript{1} \\
  \textsuperscript{1} Purdue University \\
  \textsuperscript{2} Massachusetts Institute of Technology \\
  \textsuperscript{3} Utah State University \\
  \textsuperscript{4} Harvard University
}
\allowdisplaybreaks
\begin{document}

\maketitle

\begin{abstract}

Physics-informed neural networks (PINNs) provide a flexible framework for solving forward and inverse problems governed by partial differential equations (PDEs), but standard PINN training typically relies on soft penalty formulations that combine PDE residuals, data mismatch, and initial/boundary conditions using manually chosen weights. This often leads to ill-conditioning, sensitivity to loss weights, and poor constraint satisfaction. In this work, we reformulate PINN training as an equality-constrained optimization problem and propose a novel Adaptive Momentum Feedback Linearization Optimization for Hard Constrained PINN (AdamFLIP). The key idea is to view the constraint residuals as the output of a controlled dynamical system and to compute the Lagrange multiplier as a feedback input that locally drives these residuals toward stable linear contraction dynamics. AdamFLIP then applies Adam-style first- and second-moment adaptation to the resulting feedback-linearized Lagrangian gradient, combining principled constraint handling with the scalability and robustness of adaptive neural-network optimization.  
We test AdamFLIP on a range of benchmark forward and inverse PDE problem, and
it consistently outperforms both the standard soft-constrained PINN and state-of-the-art constrained optimizers.
Specifically, on the Navier--Stokes equations benchmark, AdamFLIP \textbf{reduces relative $L_2$ error by more than two thirds} for the predicted solution compared to the next best method. 
Our AdamFLIP framework  provides an effective and computationally scalable hard constraint optimization method for PINN training.

\end{abstract}

\section{Introduction}


Physics informed neural networks (PINNs) have become 
more and more popular in scientific machine learning which helps bridge the gap between data-driven modeling and classical PDE-based simulation. In a standard PINN (c.f. \cite{raissi2019physics,karniadakis2021physics,cuomo2022scientific}), one represents the solution of a PDE with a neural network and trains it by minimizing a composite objective loss function that includes the residual of the governing differential equation (via automatic differentiation of the network) together with initial/boundary conditions and any available data. The learned solution is therefore expected to be approximate both the available data and the governing equations. This framework can naturally extend to both forward and inverse problems; in the inverse problem setting, unknown PDE parameters are treated as additional trainable variables and inferred jointly with the solution during optimization.
PINNs have consequently been applied in a wide range of engineering and scientific fields \cite{cai2021physics,guo2025advances}. 

Most existing PINN training approaches use a soft-penalty formulation, in which the PDE residual, boundary conditions (BCs), and initial conditions (ICs) are summed as separately weighted terms in a composite loss function. 
However, improper weight selection can lead to ill-conditioning, slow convergence, or solutions that fit the data while significantly violating the governing equations or boundary and initial conditions.
Moreover, because each loss term may differ in scale, minimizing the composite loss does not guarantee that each term is individually minimized.
Hence, the PINN's performance is highly sensitive to choice of penalty weights  which has been shown in extensive empirical and theoretical studies \cite{wang2021understanding,krishnapriyan2021characterizing,wang2023expert,maddu2022inverse}.


These challenges have motivated several recent methods that incorporate constrained-optimization ideas into PINN training while retaining the soft-penalty objective. 
For example, Xu and Darve \cite{xu2021trust} adopt a trust-region strategy, and ADMM-PINNs \cite{song24admmpinn} use operator splitting to decouple training. However, both methods still remain penalty weights dependent.
A more fundamental shift recasts PINN training entirely as a constrained optimization problem. One prominent direction uses augmented Lagrangian (AL) methods \cite{lu2021physics_hard_constraints,son2023enhanced,hu2025conditionally}, which treats ICs and BCs as equality constraints and performs primal updates coupled with multiplier (and penalty) updates to adaptively emphasize violated constraints. 
A second direction leverages sequential quadratic programming (SQP) and trust-region formulations;  for example,
\cite{cheng2024physics} use sequential quadratic programming with trust-region to mitigate ill-conditioning and instability observed in standard PINNs (trSQP-PINN). 
Rather than enforcing constraints through the optimizer, a third class of methods builds them directly into the network architecture. Early work in this direction embedded constraints into the trial solution \cite{lagaris1998artificial,djeumou2022neural}. 
More recent approaches hard-impose boundary conditions through Fourier-feature parameterizations \cite{straub2025hard,LI202460} or add a projection layer derived from Karush–Kuhn–Tucker (KKT) conditions to exactly satisfy linear equality constraints \cite{chen2024physics}.


Beyond PINNs, constrained optimization is a long-standing theme in machine learning and reinforcement learning \cite{kotary2021end,garcia2015comprehensive}, where the goal is to optimize a learning objective while satisfying explicit constraints subject to explicit operational or physical constraints. 
Standard approaches in this broader literature include primal–dual Lagrangian methods (e.g.\ \cite{nandwani2019primal,stooke2020responsive}), SQP-style methods (e.g.\ \cite{berahas2021sequential}), and hard-constraint-by-construction approaches (e.g.\ \cite{min2024hardnet,grontas2025pinet}). 
Recently, a separate line of work \cite{zhang2025constrained,zhang2025zeroth,cerone2025new} has brought a control-theoretic perspective to constrained optimization that proposes a feedback linearization (FL) solver that imports a classical nonlinear control idea into constrained learning. Rather than relying on primal-dual updates or second-order subproblems, FL designs the multiplier/feedback update to impose a stable, approximately linear contraction on constraint violations, and has been developed with rigorous convergence theories for general constrained nonconvex problems.

\paragraph{Our Contributions.} 
We bring this FL framework to PINN training and make the following contributions. 
First, we formulate PINN training as an equality-constrained problem that explicitly separates feasibility (IC/BC/PDE satisfaction) from optimality, and design the multiplier/feedback update to shape the constraint-violation dynamics into a stable linear system. Second, we introduce AdamFLIP, an adaptive variant that integrates FL with moment estimation following Adam \cite{kingma2014adam}. 
AdamFLIP preserves the fast convergence and robustness of adaptive gradient methods while incorporating principled constraint handling through FL. Empirically, AdamFLIP consistently outperforms both standard soft-penalty PINN training with Adam and representative constrained PINN optimizers, achieving significantly improved constraint satisfaction and solution accuracy for both forward and inverse problems. 
Third, we provide finite-time convergence analysis for a metric-compatible variant of AdamFLIP, showing that both the stationarity residual and constraint violation achives best-iterate converge at a rate of $\mathcal{O}(\log{T}/\sqrt{T})$ under standard assumptions.
Together, these results demonstrate that embedding feedback-linearization principles into modern adaptive optimizers offers a simple yet powerful alternative to penalty reweighting and classical constrained-training schemes.

Due to space constraints, we defer a more comprehensive review of the related literature and a detailed comparison with our work to Appendix~\ref{sec:related-works}.
\section{Preliminaries: Physics Informed Neural Networks (PINNs) \label{sec:pinn}}
To illustrate the framework of \textit{physics informed neural networks} (PINNs) \cite{cai2021physics,raissi2019physics,karniadakis2021physics}, we start with the general nonlinear PDE which takes the following form:
\begin{equation}\label{eq:PDE}
u_t + \cN[u] = 0,\ x \in \Omega, \ t\in[0,T],
\end{equation}
where $\Omega \subset \mathbb{R}^D$ is an open, bounded $D$ dimensional domain, 
$u : [0,T] \times \Omega \to \mathbb{R}$ denotes the unknown solution field, and 
$\cN[\cdot]$ is a (generally nonlinear) differential operator acting on~$u$.

Equation~\eqref{eq:PDE} is supplemented with the initial condition (IC):
\begin{equation}\label{eq:PDE-ic}
    u(0,x) = u_0(x), \qquad x \in \Omega,
\end{equation}
and suitable boundary conditions (BCs) prescribed on the spatial boundary $\partial\Omega$. Without loss of generality, we denote these conditions abstractly by
\begin{equation}\label{eq:PDE-bc}
\cB[u](t,x) = h(t,x), 
\qquad (t,x) \in [0,T] \times \partial\Omega,
\end{equation}
where $\cB$ is a boundary operator (e.g., Dirichlet, Neumann, Robin, or a combination) and $h$ is a given function.

PINNs offer a unified framework for combining observational data with physical laws. The key idea is to incorporate the governing PDE~\eqref{eq:PDE} together with the associated initial and boundary conditions \eqref{eq:PDE-bc}) directly into the loss function of a neural network where the required derivatives are obtained via automatic differentiation. In this setting, the solution $u$ is approximated by a neural network which is trained both to fit available data and to minimize the residuals of the PDE and its boundary conditions. Consequently, the learned surrogate remains consistent with the underlying physics. More specifically, the PINN framework introduces the PDE residual loss (physics loss) denoted as $\lphy$ , initial condition loss denoted as $\lic$ and boundary condition loss denoted as $\lbc$ according to \eqref{eq:PDE}, \eqref{eq:PDE-ic}, \eqref{eq:PDE-bc} as follows: 
\vspace{-10pt}
\begin{align}
   \lphy(u;t,x) &:= u_t(t,x) + \cN[u](t,x);\label{eq:residual}\\
   \lic(u;t,x) &:= (u(0,x) - u_0(x))^2;\label{eq:initial}\\
\lbc(u;t,x) &:= (\cB[u](t,x) -h(t,x))^2 \label{eq:boundary}
\end{align}

In specific, in PINN we collect samples (generally from a mesh) $\mathcal{D}:=\{(t_i,x_i)\}$ and we define
\begin{equation}
\begin{split}
\Dphy&:=\{(x,t)\in \D: x\in\Omega, \ t\in[0,T]\},\\~~  \Dic&:=\{(x,t)\in \D: x\in\Omega, \ t=0\}, \\\Dbc&:=\{(x,t)\in \D: (t,x) \in [0,T] \times \partial\Omega\}
\end{split}
\end{equation}
Then assume that $u$ is parameterized by the parameter of a neural network, i.e. $u = u_\theta$, then the total PDE residual loss $\Lphy$, boundary loss $\Lbc$ and initial condition $\Lic$ loss with respect to the parameter $\theta$ are given by
\begin{align*}
    \textstyle \Llab(\theta):=\frac{1}{|\Dlab|}\sum_{(t_i, x_i)\in\Dlab} \llab(u_\theta; t_i,x_i); ~~\lab \in \{\phy, \ic, \bc\}
\end{align*}

\paragraph{Forward Problem} For the forward problem, we assume that the differential operator $\cN$ is fully known and the task is to find a proper $u_\theta$ that makes $\Lphy, \Lic, \Lbc$ as small as possible, and in general, the classical approach solves this by solving the unconstrained optimization problem 
\begin{equation}\label{eq:forward}
    \min_\theta \cL(\theta):= \wphy\Lphy(\theta) +\wic \Lic(\theta) + \wbc\Lbc(\theta),
\end{equation}
where $\wphy, \wic, \wbc$ are some hand-tuned weight on different losses. Then we solve this by first-order methods such as gradient descent or Adam~\cite{kingma2014adam}.

\paragraph{Inverse Problem} The inverse problem considers a more challenging setting in which the differential operator $\cN$ in \eqref{eq:PDE} depends on unknown parameters $\kappa$ (e.g. the unknown viscosity term $v$ in 1D burgers equation). The goal is to jointly estimate the true parameter $\kappa$ while learning an accurate solution $u_\theta$ from noisy observational data. To clarify the notation, we explicitly denote the parameterized differential operator by $\cN_\kappa$. Accordingly, the associated physics loss is redefined as
\begin{align*}
   \textstyle \lphy(u,\kappa;t,x) := u_t(t,x) + \cN_\kappa[u](t,x), \quad \Lphy(\theta,\kappa):=\frac{1}{|\Dphy|}\sum_{(t_i, x_i)\in\Dphy} \lphy(u_\theta, \kappa; t_i,x_i)
\end{align*}
whereas the definitions of $\Lic,\Lbc$ still remain unchanged. Note that the goal is to both recover the optimal parameter of the neural network $\theta$ and the parameter of the differential operator $\kappa$. To achieve this, we also need to collect observation data set $u(x_i, t_i)$ where $(x_i, t_i)\in\D$, and define the data loss as
\begin{equation}\label{eq:loss_data}
\textstyle \Ldata:= \frac{1}{|\D|}
  \sum_{(t_i,x_i)\in\D}
  \left( u_\theta(t_i,x_i) - u_i \right)^2,
\end{equation}
and similar to solving the forward problem classical approach solves this by solving a minimization problem on a weighted loss that is a weighted summation of all the losses and then use first order method to find a solution.
\begin{align}\label{eq:inverse}
  \textstyle  \min_{\theta,\kappa} \cL(\theta,\kappa):=\wphy\Lphy(\theta,\kappa) +\wic \Lic(\theta) + \wbc\Lbc(\theta) + \wdata\Ldata(\theta)
\end{align}


\section{Feedback Linearization Constrained Optimization for Solving PINNs}

\paragraph{Constrained Optimization for PINNs} As mentioned in the previous section, PINNs typically solve optimization problems where the total objective loss $\cL(\theta), \cL(\theta,\kappa)$ is a linear combination of different losses such as data loss, physics loss, boundary condition loss and initial condition loss. While these constraints ideally should be satisfied during training, the highly nonlinear
structure of the network and the discretized PDE makes it difficult to achieve objective loss going to strictly zero. Further, it often requires delicate tuning in the penalty weights $\wdata,\wphy,\wic,\wbc$.

Another way to handle this is via constrained optimization rather than optimizing the linear combination of different loss. Instead of solving \eqref{eq:forward} or \eqref{eq:inverse}, rather, we choose to put some of the losses into the constraint. In specific, for the \emph{forward problem}, we consider the objective function is the physics loss $\Lphy(\theta)$, and the equality constraints aggregate the initial and boundary condition losses into a single vector $h(\theta) := [\Lic(\theta), \Lbc(\theta)]^\top \in \mathbb{R}^2$:
\begin{equation}\label{eq:forward-constrained}
\begin{split}
    \textstyle \min_\theta~~ \Lphy(\theta)
    \qquad s.t. ~~\Lbc(\theta)=0,~~ \Lic(\theta)=0
\end{split}
\end{equation}
and for the inverse problem we consider the optimization variable expands to include both the neural network weights and the unknown PDE parameters, effectively replacing $\theta$ with the joint variable $(\theta, \kappa)$. The objective becomes the data loss $f(\theta, \kappa) := \Ldata(\theta)$, and the constraint vector expands to enforce the PDE residual alongside the initial and boundary conditions, yielding $[\Lphy(\theta, \kappa), \Lic(\theta), \Lbc(\theta)]^\top \in \mathbb{R}^3$. 
\begin{equation}\label{eq:inverse-constrained}
\begin{split}
   \textstyle  \min_{\theta,\kappa}~~ \Ldata(\theta)\qquad
    s.t.~~   \Lphy(\theta,\kappa), ~\Lbc(\theta),~\Lic(\theta) = 0
\end{split}
\end{equation}

Hence, rather than consider a unconstrained optimization problem, we put some of the losses as constraint and then we can use constrained optimization solvers. Note that both the forward problem \ref{eq:forward-constrained} and inverse problem \ref{eq:inverse-constrained} fit into the general equality-constrained optimization framework as follows
\begin{align}\label{eq:eq-constrained-opt}
 \textstyle   \min_x f(\theta) \quad \text{s.t.} \quad h(\theta)=0.
\end{align}

\begin{remark}[Why Constrained Optimization for PINNs\label{remark-1} ]
Comparing with the penalty-based formulations \eqref{eq:forward}, \eqref{eq:inverse}, constrained optimization offers a principled alternative by explicitly treating physical laws and boundary conditions as constraints, rather than soft penalties, thereby providing a clearer separation between feasibility and optimality. Moreover, constrained formulations can be interpreted as implicitly performing adaptive penalty adjustment (e.g. \cite{li2022dynamic}) through dual variables, alleviating the need for manual loss reweighting and improving robustness across different problem instances.

We observe in Table~\ref{tab:combined_results} that constrained methods achieve substantially smaller boundary and initial condition errors than unconstrained optimizers such as Adam, even though the physics loss is occasionally larger, which is consistent with prior observations~\cite{rathore2024challenges,wang2024understanding,de2024error}. This behavior can be attributed to two factors: first, the physics loss computed via automatic differentiation is sensitive to local approximation errors and network smoothness; second, PINN training involves an inherent tradeoff among physics residuals, boundary conditions, and initial conditions, so enforcing feasibility more strictly can redistribute errors across these components. 

\end{remark}
\paragraph{Feedback Linearization (FL) for equality-constrained optimization.}
Having reformulated PINN training as an equality-constrained problem, the next question is how to solve it efficiently and robustly. In this work, inspired by recent control-theoretic approaches to constrained optimization~\cite{zhang2025constrained,cerone2025new}, we adopt a feedback linearization (FL) perspective, which views optimization dynamics as a controlled system and designs the multiplier to directly shape constraint evolution.

Consider the equality-constrained problem in \eqref{eq:eq-constrained-opt}, whose first-order KKT conditions are
\begin{align*}
\nabla f(\theta) + J_h(\theta)^\top \lambda = 0, \qquad h(\theta)=0.
\end{align*}
A common approach to constrained optimization is to perform gradient descent on the Lagrangian,
\begin{align*}
\theta_{t+1} = \theta_t - \eta\big(\nabla f(\theta_t) + J_h(\theta_t)^\top \lambda_t\big),
\end{align*}
which reduces the problem to designing the dual variable \(\lambda_t\).

Following~\cite{zhang2025constrained,cerone2025new}, we interpret \(\theta\) as the system state and \(\lambda\) as a feedback input, leading to a control-theoretic viewpoint in which the goal is to design a stabilizing feedback law. Let $y_t = h(\theta_t)$ denote the constraint violation. A first-order approximation gives
\begin{align*}
y_{t+1} - y_t \approx -\eta\left(J_h(\theta_t)\nabla f(\theta_t) + J_h(\theta_t)J_h(\theta_t)^\top \lambda_t\right).
\end{align*}
FL selects \(\lambda_t\) to impose stable linear dynamics on \(y_t\), namely
\begin{align*}
y_{t+1} - y_t = -\eta K y_t,
\end{align*}
where \(0 \prec K \preceq \tfrac{1}{\eta} I\). Solving for \(\lambda_t\) yields
\begin{align}\label{eq:lambda}
\lambda_t = -\big(J_h(\theta_t)J_h(\theta_t)^\top\big)^{-1}
\big(J_h(\theta_t)\nabla f(\theta_t) - K h(\theta_t)\big),
\end{align}
which enforces a controlled contraction of constraint violations. Substituting back gives the FL update
\vspace{-10pt}
\begin{align*}
\quad \theta_{t+1} \!=\! \theta_t \!-\! \eta\left(\nabla f(\theta_t) - J_h(\theta_t)^\top \big(J_h(\theta_t)J_h(\theta_t)^\top\big)^{-1}
\big(J_h(\theta_t)\nabla f(\theta_t) \!-\! K h(\theta_t)\big)\right).
\end{align*}
The analysis and empirical results in~\cite{cerone2025new, zhang2025constrained} show that the resulting algorithm performs strongly and achieves $O(\frac{1}{\sqrt{T}})$ even in nonconvex settings. 
\begin{remark}[Connection and Difference with SQP]\label{rmk:connection-with-SQP}
Recent results in~\cite{zhang2025constrained} establish a formal connection between FL and sequential quadratic programming (SQP), showing that for the specific choice $K=\frac{1}{\eta} I$, the FL update recovers the following SQP
\begin{align*}
\textstyle \min_{d} ~~  \nabla f(\theta_t)^\top d + \tfrac{1}{2\eta}\|d\|^2 \qquad
\text{s.t.} ~~  h(\theta_t) + J_h(\theta_t)d = 0.
\end{align*}
A key distinction, however, is that FL introduces an explicit gain matrix K to directly shape the constraint dynamics, enabling flexible control over the stability and rate of feasibility enforcement. Moreover, the control-theoretic formulation of FL naturally admits momentum extensions that incorporate temporal smoothing into the dynamics, and this observation directly motivates the momentum-based variants developed next.
\end{remark}
\paragraph{FL with Momentum (FL-momentum)}
In \cite{zhang2025constrained}, a momentum-based variant of feedback linearization, termed FL-momentum, was proposed to accelerate convergence. The method augments the feedback-linearized update with a momentum variable as
\begin{align*}
    m_{t} &= \beta m_{t-1} + \left(\nabla f(\theta_t) + J_h(\theta_t)^\top \lambda_t \right), 
    \qquad \left(\lambda_t \text{ is set according to \eqref{eq:lambda}}\right) \\
    \theta_{t+1} &= \theta_t - \eta m_t .
\end{align*}
Compared to naive FL, which updates the primal variable $\theta$ using only the instantaneous feedback-linearized gradient
$\nabla f(\theta_t) + J_h(\theta_t)^\top \lambda_t$, FL-momentum incorporates information from previous iterations through the momentum variable $m_t$. From an optimization perspective, the momentum term introduces an inertial effect that extrapolates the update direction based on past descent trends, allowing the iterate to move more aggressively along persistent descent directions. As a result, FL-momentum empirically achieves accelerated convergence relative to naive FL.

\section{AdamFLIP: An Optimizer for Solving Constrained Optimization in PINNs}
While FL-momentum introduced at the end of previous section achieves acceleration, it still uses a single global stepsize across all coordinates, which can be suboptimal for neural-network parametrizations where gradients are often highly anisotropic and unevenly scaled. In contrast, in unconstrained optimization, Adam~\cite{kingma2014adam} has achieved strong empirical success in training neural networks across a wide range of applications. Motivated by this observation, we propose AdamFLIP (\textbf{Ada}ptive \textbf{m}omentum \textbf{F}eedback \textbf{Li}nearization Optimization for Hard Constrained \textbf{P}INN Training), which augments FL-momentum with coordinate-wise adaptive scaling; see Algorithm~\ref{alg:AdamFLIP}.
\begin{algorithm}[htbp]
\caption{AdamFLIP Algorithm}
\label{alg:AdamFLIP}
\begin{algorithmic}[1]
\Require Initial parameters $\theta_0$, step size $\eta$, moments $m_0=0$, $v_0=0$, hyperparameters $\beta_1, \beta_2$, $\epsilon, \delta$
\For{$t = 1,2,\dots$}
    \State Compute constraint multiplier $\lambda_t =  -\big(J_h(\theta_t)J_h(\theta_t)^\top \!\!+ \epsilon I\big)^{-1}
    \big(J_h(\theta_t)\nabla f(\theta_t) \!-\! K h(\theta_t)\big)$ 
    \State $g_t \gets \nabla f(\theta_t) + J_h(\theta_t)^\top \lambda_t$
    \State $m_t \gets \beta_1 m_{t-1} + (1-\beta_1) g_t$
    \State $v_t \gets \beta_2 v_{t-1} + (1-\beta_2) g_t^{\circ 2}$
    \State $\hat{m}_t \gets \dfrac{m_t}{1-\beta_1^t},\quad \hat{v}_t \gets \dfrac{v_t}{1-\beta_2^t}$
    \State $\theta_{t+1} \gets \theta_t - \eta \, \hat{m}_t \oslash \left(\hat{v}_t^{\circ \frac{1}{2}} + \delta \right)$
\EndFor
\end{algorithmic}
\end{algorithm}
where $x^{\circ 2}$ and $x^{\circ \frac{1}{2}}$ denote the elementwise square and square root of $x$, and $x \oslash y$ denotes elementwise division. 

In addition to the first-moment estimate $m_t$, AdamFLIP maintains a second-moment accumulator $v_t$ that tracks the elementwise squared magnitude of the feedback-linearized gradient. The resulting update rescales each coordinate of the momentum direction by an estimate of its historical variability, allowing parameters with consistently large gradients to take smaller steps while amplifying progress along flatter directions.

The design of AdamFLIP is directly inspired by the Adam optimizer. Specifically, AdamFLIP mirrors Adam's use of bias-corrected first- and second-moment estimates, but replaces the standard gradient with the gradient of the Lagrangian $\nabla f(\theta_t) + J_h(\theta_t)^\top \lambda_t$. From this perspective, AdamFLIP can be interpreted as applying Adam-style adaptive normalization to feedback-linearized constrained optimization, preserving the structure and intuition of FL while inheriting the robustness and scalability that make Adam effective in large-scale neural network training.

We would also like to point out one caveat that FL-based algorithms are designed under the assumption that $J_h(\theta)J_h(\theta)^\top$ is invertible, an assumption that may not hold in PINN training, particularly near exact constraint satisfaction where some residual gradients can become near zero. In Algorithm~\ref{alg:AdamFLIP}, we mitigate potential ill-conditioning by introducing a damped solve using $J_h(\theta_t)J_h(\theta_t)^\top+\epsilon I$.


\subsection{Theoretical Guarantees}
\label{sec:theory}
In this section, we provide a finite-time convergence guarantee for a variant of AdamFLIP. We note that a rigorous analysis of the original AdamFLIP remains challenging. To address this, we introduce a modified version (Algorithm~\ref{alg:metric-adamflip}), for which we establish convergence.

The key difficulty with the original AdamFLIP (Algorithm~\ref{alg:AdamFLIP}) lies in a mismatch of metrics: the feedback-linearization multiplier is computed in the Euclidean metric, while the final update is preconditioned by an Adam-style diagonal metric $D_t = \diag\left(\frac{1}{\sqrt{\bar v_{t}}+\delta}\right)$. This inconsistency makes it difficult to control the optimization dynamics. To resolve this issue, the variant computes the multiplier directly in the same diagonal metric $D_t$ used for the update, ensuring metric consistency throughout. In addition, following \cite{reddi2018convergence}, we adopt a monotone second-moment estimate (AMSGrad-style) to stabilize the adaptive preconditioner. This modification prevents oscillations in the effective stepsizes and enables a tractable convergence analysis.

We emphasize that, despite strong empirical performance, providing a convergence guarantee for the original AdamFLIP remains an open problem. We leave this question to future work. In the following, we introduce standard assumptions for nonconvex constrained optimization (cf.~\cite{nocedal2006numerical, peterson1973review}).




\begin{algorithm}[htbp]
\caption{AdamFLIP variant }
\label{alg:metric-adamflip}
\begin{algorithmic}[1]
\Require Initial point $\theta_1$, moments $m_0=0$, $v_0=0$, $\bar v_0=0$, stepsizes $\{\eta_t\}$, parameters $\beta_1,\beta_2\in[0,1)$, denominator constant $\delta>0$, gain $K=\kappa I$
\For{$t=1,2,\dots$}
    \State Define $D_t=\diag\left(\frac{1}{\sqrt{\bar v_{t-1}}+\delta}\right).$
    \State Compute
       $ \lambda_t=-\big(J_h(\theta_t)D_tJ_h(\theta_t)^\top\big)^{-1}
        \big(J_h(\theta_t)D_t\nabla f(\theta_t)-\kappa h(\theta_t)\big).$
    \State $g_t=\nabla f(\theta_t)+J_h(\theta_t)^\top\lambda_t.$
    \State $m_t=\beta_1m_{t-1}+(1-\beta_1)g_t,\quad v_t=\beta_2v_{t-1}+(1-\beta_2)g_t^{\circ2}.$
    \State $\hat m_t=\frac{m_t}{1-\beta_1^t},\quad \tilde v_t=\frac{v_t}{1-\beta_2^t}.$
    \State $\bar v_t=\max_{\rm elem}\{\bar v_{t-1},\tilde v_t\}.$
    \State $\theta_{t+1}=\theta_t-\eta_tD_t\hat m_t.$
\EndFor
\end{algorithmic}
\end{algorithm}



\begin{assumption}[Smoothness and regularity]
\label{assump:smooth-compact}
$f,h$ are $C^2$ on a neighborhood of a compact set $\mathcal C$, the iterates remain in $\mathcal C$, and $J_h(\theta)$ has full row rank on $\mathcal C$. Consequently, there exists $G$ such that $\|g_t\|\le G$ for all $t$.
\end{assumption}

\begin{assumption}[Stepsizes]
\label{assump:stepsize}
$\eta_t>0$, $\eta_{t+1}\le\eta_t$, $\eta_t\le 1/\kappa$, $\sum_t\eta_t=\infty$, $\sum_t\eta_t^2<\infty$.
\end{assumption}

Define
    $M_t:=J_h(\theta_t)D_tJ_h(\theta_t)^\top,\quad
    \lambda_t^\dagger:=-M_t^{-1}J_h(\theta_t)D_t\nabla f(\theta_t),$
and
    $r_t:=\nabla f(\theta_t)+J_h(\theta_t)^\top\lambda_t^\dagger,$
then we can prove the following theorem

\begin{theorem}[Finite-time convergence, proof see Appendix \ref{sec:proof-main}]
\label{thm:main}
Under Assumptions~\ref{assump:smooth-compact}--\ref{assump:stepsize}, the iterates generated by Algorithm~\ref{alg:metric-adamflip} satisfy
\begin{align*}
\textstyle    \sum_{t=1}^T \eta_t \left(\|r_t\|^2 + \|h(\theta_t)\|_1 \right)
    = \mathcal{O}\left(1 + \sum_{t=1}^T \eta_t^2\right).
\end{align*}
Consequently,
\vspace{-15pt}
\begin{align*}
    \textstyle \min_{1\le t\le T} \left(\|r_t\|^2 + \|h(\theta_t)\|_1 \right)
    = \mathcal{O}\left(\frac{1}{\sum_{t=1}^T \eta_t}\right).
\end{align*}
\end{theorem}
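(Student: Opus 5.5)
The plan is to use a merit (Lyapunov) function of exact-penalty type, $V(\theta)=f(\theta)+\rho\|h(\theta)\|_1$ with $\rho$ large, and to show that it decreases along the iterates up to $\mathcal{O}(\eta_t^2)$ errors. Summing this descent inequality and using compactness of $\mathcal C$ gives the first claim. The second claim then follows because $\min_t a_t\sum_t\eta_t\le\sum_t\eta_t a_t$.

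The first step is to read off the structure of the update in the metric $D_t$. By Assumption~\ref{assump:smooth-compact} and the AMSGrad monotonicity $\bar v_t\le G^2$ (coordinatewise), the matrix $D_t$ satisfies $\frac{1}{G+\delta}I\preceq D_t\preceq\frac1\delta I$ and is nonincreasing in $t$. Full row rank on the compact set $\mathcal C$ gives uniform bounds $\|M_t^{-1}\|\le c$, hence bounded multipliers $\lambda_t,\lambda_t^\dagger$. The definition of $\lambda_t$ yields two exact identities:
\[
J_h(\theta_t)D_tg_t=\kappa h(\theta_t), \qquad g_t=r_t-J_h(\theta_t)^\top M_t^{-1}\kappa h(\theta_t).
\]
In addition, $J_h(\theta_t)D_tr_t=0$, so $r_t^\top D_t J_h^\top(\cdot)=0$. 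Consequently, if the step were the idealized $-\eta_tD_tg_t$, we would get
\[
\langle\nabla f,D_tg_t\rangle = r_t^\top D_t r_t-\kappa\, h^\top M_t^{-1}J_hD_t\nabla f,
\]
and the constraint would evolve as $h(\theta_{t+1})\approx(1-\eta_t\kappa)h(\theta_t)$. With $\eta_t\kappa\le1$ this gives $\|h(\theta_{t+1})\|_1\le(1-\eta_t\kappa)\|h(\theta_t)\|_1+\mathcal{O}(\eta_t^2)$. The cross term is bounded by $\kappa\|M_t^{-1}J_hD_t\nabla f\|_\infty\|h\|_1=\kappa\|\lambda_t^\dagger\|_\infty\|h\|_1$, and it is absorbed by choosing $\rho>\sup_t\|\lambda_t^\dagger\|_\infty+1$. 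This yields the ideal descent
\[
V(\theta_{t+1})\le V(\theta_t)-\eta_t\Bigl(\tfrac{1}{G+\delta}\|r_t\|^2+\kappa\|h(\theta_t)\|_1\Bigr)+\mathcal{O}(\eta_t^2),
\]
where the $\mathcal{O}(\eta_t^2)$ term comes from $C^2$ Taylor remainders and $\|D_t\hat m_t\|\le G/\delta$.

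The main obstacle is that the actual step uses the momentum $\hat m_t$ rather than $g_t$. I would handle this with the standard AMSGrad/heavy-ball device. Write $\hat m_t$ as a convex combination $\sum_{s\le t}w_{t,s}g_s$, with weights decaying geometrically in $t-s$. Bound $\|g_t-g_s\|\le L\|\theta_t-\theta_s\|\le L\sum_{k=s}^{t-1}\eta_k G/\delta$; this uses that $g(\theta,D)$ is Lipschitz in $\theta$ and in $D$ on $\mathcal C$. Also account for the metric drift $D_s$ versus $D_t$ through the telescoping bound $\sum_t\|D_{t}-D_{t+1}\|_1\le d/\delta$, which follows from monotonicity. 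Since $\eta$ is nonincreasing, $\sum_{s}\beta_1^{t-s}\sum_{k=s}^{t-1}\eta_k\lesssim\eta_{t-\tau}/(1-\beta_1)^2$, so the mismatch contributes $\mathcal{O}(\eta_t\eta_{t-\cdot})$ terms, which are summable as $\mathcal{O}(\sum\eta_t^2)$. The bias-correction factor $1/(1-\beta_1^t)$ matters only at early times and is bounded. The metric-drift terms contribute $\mathcal{O}(\eta_1)\cdot\mathcal{O}(1)$ after telescoping. I expect the careful bookkeeping of the drift of $D_t$ inside $\lambda_t$, together with the nonsmoothness of $\|h\|_1$, to be the delicate part. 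For the latter I would bound $\|h(\theta_{t+1})\|_1$ via $\|h(\theta_t)-\eta_tJ_hD_t\hat m_t\|_1+\mathcal{O}(\eta_t^2)$ and substitute $J_hD_t g_s$ for the past gradients.

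Finally, summing from $t=1$ to $T$ and using that $V$ is bounded below on $\mathcal C$ gives
\[
\sum_t\eta_t\bigl(c_1\|r_t\|^2+c_2\|h(\theta_t)\|_1\bigr)\le V(\theta_1)-\inf_{\mathcal C}V+C\Bigl(1+\sum_t\eta_t^2\Bigr),
\]
which is the first claim. The second claim follows by dividing by $\sum_t\eta_t$ and noting that $\sum_t\eta_t^2<\infty$ by Assumption~\ref{assump:stepsize}.
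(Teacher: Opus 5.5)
Your proposal follows essentially the same route as the paper. Both use the exact-penalty merit function $f+\rho\|h\|_1$ with $\rho>\sup_t\|\lambda_t^\dagger\|_\infty$, the identities $J_h(\theta_t)D_tg_t=\kappa h(\theta_t)$ and $J_h(\theta_t)D_tr_t=0$, the finite-variation bound from AMSGrad monotonicity, and a momentum tracking error controlled by Lipschitzness of $g$ in $(\theta,D)$. The only minor difference is that the paper handles the momentum sum by exchanging the order of summation and using $\eta_t\le\eta_k$.

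One small slip: since $\lambda_t=\lambda_t^\dagger+\kappa M_t^{-1}h(\theta_t)$, the correct decomposition is $g_t=r_t+\kappa J_h(\theta_t)^\top M_t^{-1}h(\theta_t)$. Your minus sign contradicts your own identity $J_hD_tg_t=\kappa h$, and fixing it makes the cross term $-\kappa{\lambda_t^\dagger}^\top h(\theta_t)$. This is harmless because you bound that term in absolute value anyway.
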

\begin{remark}[Discussion on the convergence implications and assumptions]
The quantity $\|r_t\|$ corresponds to a first-order stationarity residual, while $\|h(\theta_t)\|_1$ measures feasibility violation. Together, $\|r_t\|^2 + \|h(\theta_t)\|_1$ serves as a natural surrogate for the KKT residual of the constrained problem. Consequently, Theorem~\ref{thm:main} establishes that this KKT residual diminishes along the iterates in a weighted average sense, which in turn implies a best-iterate convergence guarantee.

In particular, choosing $\eta_t = \frac{\eta}{\sqrt{t}\log t}$ for $t\ge 2$ satisfies the stepsize conditions. In this case, Theorem~\ref{thm:main} implies the best-iterate guarantee
$$
\textstyle \min_{2\le t\le T} \left(\|r_t\|^2 + \|h(\theta_t)\|_1 \right)
= \mathcal{O}\!\left(\frac{\log T}{\sqrt{T}}\right).
$$
 We note, however, that the obtained rate is still weaker than the classical rate typically achieved by first-order methods for nonconvex constrained optimization with $\textstyle \min_{1\le t\le T} \left(\|r_t\|^2 + \|h(\theta_t)\|_1 \right)
= \mathcal{O}\!\left(\frac{1}{{T}}\right)$. This gap reflects the additional challenges introduced by the adaptive metric and the coupled multiplier update, and closing this gap remains an interesting direction for future work.

We also note that the full-row-rank condition on $J_h(\theta)$ cannot be guaranteed in general for PINN training, particularly near exact constraint satisfaction where some residual gradients may become degenerate. In practice, we do not observe this issue to be problematic, and we further stabilize the multiplier computation by using a damped solve with $J_h(\theta_t)J_h(\theta_t)^\top+\epsilon I$. Extending the convergence analysis to rank-deficient or relaxed constraint settings remains an open direction.
\end{remark}
\section{Experiment Results}\label{sec:results}
\vspace{-10pt}

\begin{table}[htbp]
\centering
\caption{Consolidated error comparison of evaluated models. Metrics include relative $L^2$ error, wall-clock time (WT, in seconds) measured on an NVIDIA H100 GPU (80\,GB), and constraint errors (CE) as MSE across initial/boundary conditions (IC/BC), physics residuals (Phy), and data loss (DL). Parameter recovery absolute errors are denoted by Err. Best results are highlighted in {\colorbox{bestblue!60}{blue}} and second-best in \colorbox{bestblue!25}{light blue}.}
\label{tab:combined_results}
\setlength{\tabcolsep}{4pt}
\adjustbox{max width=\textwidth}{%
\begin{tabular}{llccccc}
\toprule
Dataset & Metric & Standard PINN & AdamFLIP & FL-PINN & AL-PINN & trSQP-PINN \\
\midrule
 & Rel. $L_2$ & $2.51\times10^{-1}$ & \first{$3.12\times10^{-2}$} & $1.06\times10^{-1}$ & \second{$8.20\times10^{-2}$} & $4.43\times10^{-1}$ \\
 & CE (IC) & $4.26\times10^{-3}$ & \first{$1.00\times10^{-4}$} & \second{$6.82\times10^{-4}$} & $2.34\times10^{-3}$ & $9.91\times10^{-2}$ \\
Burgers Fwd & CE (BC) & $2.44\times10^{-3}$ & \second{$1.00\times10^{-4}$} & $4.40\times10^{-4}$ & \first{$6.25\times10^{-5}$} & $9.80\times10^{-2}$ \\
 & CE (Phy) & \second{$1.49\times10^{-3}$} & $2.60\times10^{-3}$ & $1.13\times10^{-2}$ & $2.97\times10^{-3}$ & \first{$1.31\times10^{-3}$} \\
 & WT (s) & \first{113} & \second{115} & 133 & 155 & 625 \\
\midrule

 & Rel. $L_2$ & $5.46\times10^{-1}$ & \first{$5.74\times10^{-2}$} & $4.19\times10^{-1}$ & \second{$4.02\times10^{-1}$} & $4.54\times10^{-1}$ \\
 & CE (IC) & \second{$9.00\times10^{-3}$} & \first{$2.90\times10^{-3}$} & $2.26\times10^{-2}$ & $3.06\times10^{-2}$ & $4.41\times10^{-1}$ \\
Burgers Inv & CE (BC) & \first{$1.00\times10^{-4}$} & \second{$3.50\times10^{-3}$} & $2.04\times10^{-2}$ & $2.73\times10^{-2}$ & $2.56\times10^{-1}$ \\
 & CE (Phy) & \first{$1.00\times10^{-4}$} & $2.11\times10^{-2}$ & \second{$1.80\times10^{-3}$} & $2.48\times10^{-2}$ & $5.20\times10^{-3}$ \\
 & CE (DL) & $1.13\times10^{-1}$ & \first{$1.20\times10^{-3}$} & $6.62\times10^{-2}$ & \second{$5.94\times10^{-2}$} & $4.07\times10^{-1}$ \\
 & Err ($\kappa_1$) & \second{$3.33\times10^{-2}$} & \first{$1.54\times10^{-2}$} & $1.28\times10^{-1}$ & $5.94\times10^{-1}$ & $9.82\times10^{-1}$ \\
 & Err ($\kappa_2$) & \second{$6.50\times10^{-3}$} & \first{$3.60\times10^{-3}$} & $9.65\times10^{-2}$ & $7.97\times10^{-2}$ & $1.90\times10^{-2}$ \\
 & WT (s) & \first{248} & \second{324} & 334 & 334 & 621 \\
\midrule

 & Rel. $L_2$ & $7.71\times10^{-1}$ & \first{$2.85\times10^{-1}$} & \second{$4.26\times10^{-1}$} & $4.93\times10^{-1}$ & $4.56\times10^{-1}$ \\
 & CE (IC) & $1.65\times10^{-3}$ & \first{$1.42\times10^{-5}$} & \second{$1.36\times10^{-4}$} & $1.11\times10^{-3}$ & $1.69\times10^{-3}$ \\
TFMDWEs Fwd & CE (BC) & $2.19\times10^{-2}$ & \first{$1.47\times10^{-4}$} & \second{$4.20\times10^{-3}$} & $5.16\times10^{-2}$ & $2.90\times10^{-2}$ \\
 & CE (Phy) & \second{$1.07\times10^{-1}$} & \first{$8.87\times10^{-2}$} & $1.19\times10^{-1}$ & $7.88\times10^{-1}$ & $8.13\times10^{-1}$ \\
 & WT (s) & 56 & \first{50} & \second{52} & 53 & 105 \\
\midrule

 & Rel. $L_2$ & \second{$2.22\times10^{-1}$} & \first{$3.95\times10^{-2}$} & $5.99\times10^{-1}$ & $8.81\times10^{-1}$ & $6.67\times10^{-1}$ \\
 & CE (IC) & $9.01\times10^{-4}$ & \second{$1.77\times10^{-5}$} & \first{$1.56\times10^{-5}$} & $2.60\times10^{-4}$ & $1.19\times10^{-4}$ \\
TFMDWEs Inv & CE (BC) & $2.27\times10^{-2}$ & \first{$2.57\times10^{-4}$} & \second{$4.08\times10^{-4}$} & $3.74\times10^{-3}$ & $3.59\times10^{-3}$ \\
 & CE (Phy) & $7.01\times10^{-2}$ & $6.53\times10^{-2}$ & $1.82\times10^{-1}$ & \second{$4.92\times10^{-3}$} & \first{$3.78\times10^{-3}$} \\
 & CE (DL) & \second{$3.59\times10^{-3}$} & \first{$1.13\times10^{-4}$} & $2.61\times10^{-2}$ & $5.62\times10^{-2}$ & $5.44\times10^{-2}$ \\
 & Err ($\alpha$) & $1.93\times10^{-1}$ & \first{$3.40\times10^{-2}$} & \second{$7.40\times10^{-2}$} & $9.54\times10^{-2}$ & $1.64\times10^{-1}$ \\
 & WT (s) & \first{120} & \second{133} & 141 & 134 & 144 \\
\midrule

 & Rel. $L_2$ & $3.43\times10^{-2}$ & \first{$7.53\times10^{-3}$} & \second{$9.49\times10^{-3}$} & $3.09\times10^{-2}$ & $6.60\times10^{-1}$ \\
 & CE (IC) & $1.20\times10^{-4}$ & \second{$5.59\times10^{-5}$} & $1.13\times10^{-4}$ & \first{$3.65\times10^{-5}$} & $1.22\times10^{-1}$ \\
Heat Fwd & CE (BC) & $1.56\times10^{-4}$ & \first{$6.35\times10^{-5}$} & $1.70\times10^{-4}$ & \second{$7.89\times10^{-5}$} & $2.62\times10^{-2}$ \\
 & CE (Phy) & $1.82\times10^{-3}$ & \first{$2.00\times10^{-4}$} & \second{$7.35\times10^{-4}$} & $7.41\times10^{-4}$ & $1.07\times10^{-2}$ \\
 & WT (s) & \first{261} & \second{336} & 354 & 364 & 1260 \\
\midrule

 & Rel. $L_2$ & \second{$2.90\times10^{-2}$} & \first{$1.59\times10^{-2}$} & $4.06\times10^{-2}$ & $3.98\times10^{-2}$ & $6.42\times10^{-1}$ \\
 & CE (IC) & $3.06\times10^{-5}$ & \first{$1.43\times10^{-5}$} & \second{$1.80\times10^{-5}$} & $1.71\times10^{-4}$ & $4.81\times10^{-4}$ \\
Heat Inv & CE (BC) & $1.19\times10^{-4}$ & \second{$1.03\times10^{-5}$} & \first{$8.73\times10^{-6}$} & $3.07\times10^{-4}$ & $4.57\times10^{-3}$ \\
 & CE (Phy) & $2.79\times10^{-4}$ & \second{$9.13\times10^{-5}$} & \first{$7.44\times10^{-5}$} & $2.29\times10^{-4}$ & $3.73\times10^{-3}$ \\
 & CE (DL) & \second{$5.22\times10^{-5}$} & \first{$1.51\times10^{-5}$} & $8.81\times10^{-5}$ & $9.18\times10^{-5}$ & $2.59\times10^{-2}$ \\
 & Err ($\kappa$) & \second{$9.00\times10^{-4}$} & \first{$5.00\times10^{-4}$} & $6.70\times10^{-3}$ & $2.00\times10^{-3}$ & $8.84\times10^{-1}$ \\
 & WT (s) & \first{297} & \second{346} & 360 & 361 & 1300 \\
\midrule

 & Rel. $L_2(u)$ & \second{$3.62\times10^{-2}$} & \first{$1.18\times10^{-2}$} & $7.70\times10^{-2}$ & $5.19\times10^{-2}$ & $1.96\times10^{0}$ \\
 & Rel. $L_2(v)$ & \second{$3.48\times10^{-2}$} & \first{$1.06\times10^{-2}$} & $8.13\times10^{-2}$ & $3.69\times10^{-2}$ & $9.68\times10^{-1}$ \\
Navier--Stokes & CE (IC) & \second{$2.15\times10^{-5}$} & \first{$1.26\times10^{-5}$} & $1.98\times10^{-3}$ & $1.10\times10^{-3}$ & $7.82\times10^{-4}$ \\
 & CE (BC) & \second{$1.02\times10^{-4}$} & \first{$4.86\times10^{-5}$} & $2.72\times10^{-3}$ & $8.13\times10^{-4}$ & $3.56\times10^{-4}$ \\
 & CE (Phy) & \second{$6.11\times10^{-4}$} & \first{$2.53\times10^{-4}$} & $1.08\times10^{-2}$ & $8.56\times10^{-4}$ & $1.29\times10^{1}$ \\
 & WT (s) & \first{155} & \second{335} & 414 & 410 & 720 \\

\bottomrule
\end{tabular}%
}
\end{table}
We evaluate AdamFLIP on four benchmark PDEs spanning 1D and 2D settings, linear and nonlinear dynamics, and integer- and fractional-order operators: (i) 1D viscous Burgers equation, (ii) 1D time-fractional mixed diffusion-wave equation (TFMDWE), (iii) 2D heat equation, and (iv) 2D incompressible Navier--Stokes equations (NSE). Each problem is tested in both \emph{forward} (all PDE parameters known) and \emph{inverse} (unknown parameters co-optimized with the network) settings, except NSE which is tested in the forward setting only. We compare against a standard PINN (Adam), AL-PINN~\cite{son2023enhanced}, trSQP-PINN~\cite{cheng2024physics}, and FL-PINN (feedback linearization without adaptive moments), as summarized in Table~\ref{tab:model_acronyms}. The rationale for selecting these baselines is discussed in Appendix~\ref{sec:related-works} and Remark~\ref{rmk:baseline-choice}. All methods share identical network architectures, training data, and stopping criteria per problem to ensure a fair comparison. Full experimental details (architecture, collocation points, hyperparameters, et al.) are deferred to Appendix~\ref{app:detailed_results}.

\paragraph{Burgers' equation.}
We consider the 1D viscous Burgers equation on $\Omega = [-1,1]$, $t \in [0,T]$:
\begin{align}
&u_t + u u_x- \nu u_{xx}  = 0,\quad u(\pm 1, t) = 0,\quad u(x,0) = -\sin(\pi x),\label{eq:burgers}
\end{align}
with $\nu = 0.01/\pi$. The nonlinear advection--diffusion coupling produces steep shock formation, making this a standard yet challenging PINN benchmark. In the inverse setting, two unknown parameters $\kappa_1$ (scaling $u u_x$) and $\kappa_2$ (scaling $u_{xx}$) are co-optimized with the network, initialized at $\kappa_1{=}2,\,\kappa_2{=}0$ (true values: $\kappa_1{=}1$, $\kappa_2{=}\nu$).

\paragraph{Time-fractional mixed diffusion-wave equation (TFMDWE).}
On $\Omega = (0,\pi)$, $t \in [0,1]$:
\begin{align}
&D_t^{\alpha} u - u_{xx} = f(x,t), \quad u\big|_{\partial\Omega} = 0, \quad u(x,0) = 0, \label{eq:tfmdwe}
\end{align}
where $D_t^{\alpha}$ is the Caputo fractional derivative of order $\alpha \in (0,1]$, modeling anomalous diffusion in viscoelastic media~\cite{luchko2011fractional}. With $f(x,t) = [\Gamma(4)/\Gamma(4{-}\alpha)\, t^{3-\alpha} + t^3]\sin(x)$, the exact solution is $u = t^3\sin(x)$. In the inverse setting, $\alpha$ is treated as an unknown parameter.
\paragraph{2D heat equation.}
On $\Omega = (0,1)^2$, $t \in [0,1]$:
\begin{align}
&u_t - \nu \Delta u = 0, \quad u\big|_{\partial\Omega} = 0, \quad u(x,y,0) = \sin(\pi x)\sin(\pi y), \label{eq:2d-heat}
\end{align}
with analytical solution $u = \sin(\pi x)\sin(\pi y)\, \mathrm{e}^{-2\pi^2 \nu t}$. In the inverse setting, the thermal diffusivity $\nu$ (denoted $\kappa$) is unknown and co-optimized.

\paragraph{2D Navier--Stokes equations.}
On $\Omega = [0,2\pi]^2$, $t \in [0,T]$:
\begin{align}
&\mathbf{u}_t + (\mathbf{u} \cdot \nabla)\mathbf{u} = -\nabla p + \nu \Delta \mathbf{u}, \quad \nabla \cdot \mathbf{u} = 0, \label{eq:ns}
\end{align}
where $\mathbf{u} = (u,v)$ is the velocity, $p$ is the pressure, and $\nu = 0.01$. We benchmark on the Taylor--Green vortex (closed-form solution). The network maps $(\mathbf{x}, t) \mapsto (\mathbf{u}, p)$.

\paragraph{Discussion.}
Table~\ref{tab:combined_results} shows all results. For all seven problem configurations, AdamFLIP achieves the lowest relative $L_2$ error, , typically improving over the next-best baseline by a factor of $2$ to $10$. We highlight several key observations.

\textit{Forward problems.} On Burgers (forward), AdamFLIP attains a relative $L_2$ error of $3.12{\times}10^{-2}$, roughly $3{\times}$ lower than the next-best AL-PINN ($8.20{\times}10^{-2}$) and an order of magnitude lower than Standard PINN ($2.51{\times}10^{-1}$), while simultaneously achieving the best IC/BC constraint satisfaction. On the 2D heat equation, AdamFLIP ($7.53{\times}10^{-3}$) and FL-PINN ($9.49{\times}10^{-3}$) both substantially outperform Standard PINN ($3.43{\times}10^{-2}$), confirming the benefit of constrained formulations; notably, trSQP-PINN diverges ($6.60{\times}10^{-1}$) on this problem. For the TFMDWE, all methods have difficulty due to the non-local fractional operator, yet AdamFLIP ($2.85{\times}10^{-1}$) still improves upon the next-best FL-PINN ($4.26{\times}10^{-1}$) while reducing IC/BC errors by one--two orders of magnitude. On the Navier--Stokes benchmark, AdamFLIP reduces $L_2$ errors for both velocity fields to ${\sim}1{\times}10^{-2}$, roughly $3{\times}$ better than Standard PINN and an order of magnitude better than FL-PINN; trSQP-PINN again fails catastrophically on this system.

\textit{Inverse problems.} The advantages of AdamFLIP are even more pronounced in parameter inferring. On Burgers (inverse), AdamFLIP achieves $L_2 = 5.74{\times}10^{-2}$, nearly an order of magnitude below all baselines (${\sim}4{-}5{\times}10^{-1}$), and recovers $\kappa_1,\kappa_2$ with the smallest absolute errors ($1.54{\times}10^{-2}$ and $3.60{\times}10^{-3}$, respectively). For the TFMDWE inverse problem, AdamFLIP attains $L_2 = 3.95{\times}10^{-2}$ and the most accurate recovery of the fractional order $\alpha$ (error $3.40{\times}10^{-2}$), while AL-PINN and FL-PINN degrade to errors larger than $5{\times}10^{-1}$. On the 2D heat inverse problem, all methods except trSQP-PINN perform reasonably, with AdamFLIP ($1.59{\times}10^{-2}$) achieving the best $L_2$ accuracy and parameter error ($5.00{\times}10^{-4}$).

\textit{Computational cost.} AdamFLIP incurs only a modest overhead relative to Standard PINN (typically ${\leq}30\%$), while trSQP-PINN requires $2{-}5{\times}$ longer wall-clock time due to trust-region subproblem solves and pretraining requirements. AL-PINN and FL-PINN have comparable cost to AdamFLIP.

\textit{Constraint satisfaction trade-offs.} Standard PINN occasionally achieves the lowest physics-loss CE (e.g., Burgers inverse), yet this does not translate to more accurate solution, consistent with the observation in Remark \ref{remark-1} that lower physics residual does not imply better constraint satisfaction in solution space. AdamFLIP's feedback mechanism adaptively redistributes enforcement effort across IC, BC, and physics constraints, yielding the best overall accuracy.

Additional spatiotemporal solution fields, absolute error maps, and per-problem analysis are provided in Appendix~\ref{app:detailed_results}.

\section{Conclusion and Future Work}

In this paper, we introduced AdamFLIP, an adaptive, momentum-based feedback linearization optimizer designed for hard-constrained PINN training. By formulating a feedback mechanism that drives constraint violations toward a stable linear dynamical system, and integrating Adam-style momentum updates, AdamFLIP enforces hard constraints and fasten the convergence. Crucially, it aviods the process of manual loss-weight tuning while maintaining computational efficiency. We evaluated our method against state-of-the-art constrained PINN baselines on classic PDEs and time-fractional partial differential equations (TFMDWEs) across both forward and inverse problem settings. Empirical results demonstrate that AdamFLIP not only guarantees rigorous constraint satisfaction but also yields lower relative $L_2$ errors with only a marginal increase in training time. 

These findings open several promising avenues for future research. First, extending this first-order framework to zeroth-order optimization~\cite{zhang2025zeroth} could further enhance both model expressiveness and interpretability. Second, applying AdamFLIP to naturally enforce PDE symmetries, specifically, Lie point symmetries \cite{brandstetter2022lie, akhound2023lie, mialon2023self}, presents a compelling direction for developing even more physically rigorous neural solvers.
\begin{ack}
Guang Lin gratefully acknowledges support from the National Science Foundation
(DMS-2533878, DMS-2053746, DMS-2134209, ECCS-2328241, CBET-2347401, and OAC-2311848),
the U.S.~Department of Energy (DOE) Office of Science Advanced Scientific Computing
Research program under award DE-SC0023161, the SciDAC LEADS Institute, and the
DOE Fusion Energy Sciences program under award DE-SC0024583. Runyu Zhang is
supported by the MIT Postdoctoral Fellowship Program for Engineering Excellence.
Na Li gratefully acknowledges support in part from NSF ASCENT under Grant 2328241
and in part from the NSF AI Institute under Grant 2112085.
\end{ack}

\bibliographystyle{unsrt}  
\bibliography{references} 

\appendix

\section{Related Works: Constrained Optimization Methods for PINN, and comparison with AdamFLIP}\label{sec:related-works}

In this section, we review recent constrained-optimization approaches for PINN training and compare their algorithmic principles with AdamFLIP.

One line of work improves the optimization dynamics while largely preserving the penalty-based PINN objective. For example, \cite{xu2021trust} uses a trust-region strategy that restricts each update to a region where a local model of the composite loss is reliable, thereby improving stability in ill-conditioned regimes. ADMM-PINNs~\cite{song24admmpinn} instead use operator splitting: auxiliary variables are introduced so that the original problem decomposes into a PINN-based smooth subproblem and additional subproblems that may admit proximal or closed-form updates. Although these methods introduce constraints, such as the trust-region constraint in trust-region methods and the variable-splitting consistency constraint in ADMM, the main PINN training objective still relies on a weighted combination of PDE, data, and boundary/initial residuals. Therefore, they remain closer to soft-penalty-based PINN methods rather than fully hard-constrained formulations.

A more fundamental shift is to recast PINN training itself as a constrained optimization problem. In this view, some loss components are no longer treated merely as weighted penalties but are instead imposed as equality constraints. A prominent family of methods follows the augmented Lagrangian (AL) principle~\cite{lu2021physics_hard_constraints,son2023enhanced, song24admmpinn,hu2025conditionally}.  For instance, hPINNs~\cite{lu2021physics_hard_constraints} use penalty and augmented-Lagrangian mechanisms to impose hard constraints in inverse-design problems, where the design variables and the PDE solution are learned through neural parameterizations. AL-PINNs~\cite{son2023enhanced} further formulate the PINN training problem by treating initial and boundary conditions as constraints and optimizing a sequence of augmented-Lagrangian objectives. In such methods, the training objective typically takes the form of the PDE residual augmented by multiplier terms and quadratic penalties for constraint violations. The multipliers and penalty parameters are updated over training so that violated constraints are emphasized more strongly. More recently, conditionally adaptive AL methods~\cite{hu2025conditionally} introduce separate penalty parameters for heterogeneous constraints and update them conditionally, so that difficult constraints receive stronger enforcement. These AL-based approaches provide an important step beyond fixed penalty weighting, since the effective weights on the constraints are adapted through dual and penalty updates. However, their practical behavior still depends on the design of penalty schedules, multiplier updates, inner-loop training accuracy, and stopping criteria for the sequential augmented problems. In contrast, AdamFLIP does not enforce constraints by repeatedly increasing penalty weights or performing dual ascent on an augmented objective. Instead, the multiplier in AdamFLIP is computed from the local Jacobian geometry and the desired feedback dynamics, rather than learned through an outer penalty/multiplier update loop. 

A second constrained-optimization direction is based on SQP and trust-region methods. For example, trSQP-PINN~\cite{cheng2024physics} formulates PINN training as a hard-constrained problem and, at each iteration, solves a local approximation obtained by linearizing the constraints and using a quadratic model of the objective within a trust region. To reduce the cost of classical SQP, it uses quasi-Newton approximations and a pretraining stage to improve initialization. For instance, complex constraint implementations like trSQP-PINN require a pre-trained model and use quasi-Newton update to enforce the constraints, which  doubling or tripling the total training time (as shown in our experiments in WT comparison in Table~\ref{tab:combined_results}). 
AdamFLIP is related in spirit to SQP because both methods use local constraint linearization; see Remark~\ref{rmk:connection-with-SQP}. However, their algorithmic structures are different. trSQP-PINN solves a constrained quadratic/trust-region subproblem at each iteration and requires quasi-Newton updates, trust-region management, and pretraining. AdamFLIP instead computes a feedback-linearization multiplier in closed form and then applies an Adam-style adaptive update to the resulting Lagrangian gradient. Thus, AdamFLIP retains local constraint awareness while remaining closer to a first-order neural-network optimizer.

Rather than enforcing constraints through the optimizer, a third class of methods builds them directly into the network architecture. This idea has early roots in trial-solution constructions~\cite{lagaris1998artificial}, where the neural ansatz is designed to satisfy prescribed initial or boundary conditions by construction. More recent work develops this principle in more systematic and problem-adapted ways, including physics-informed architectures for dynamical systems with built-in structural constraints~\cite{djeumou2022neural}, Fourier-based parameterizations that hard-impose certain boundary conditions~\cite{straub2025hard,LI202460}, and KKT-hPINN~\cite{chen2024physics}, which uses a KKT-derived projection layer to enforce linear equality constraints exactly. These hard-constraint-by-construction methods can guarantee exact satisfaction of eligible constraints at every forward pass. However, they are often problem-specific: the ansatz, embedding, or projection layer must be tailored to the boundary condition, domain geometry, or constraint structure. Moreover, they do not directly address the optimization difficulty associated with the remaining PDE residual, data loss, or inverse-parameter identification objective. AdamFLIP is complementary to this line of work: it does not require a specialized architecture, but instead operates at the optimizer level once the relevant PINN losses are formulated as equality constraints.


\begin{remark}[Choice of constrained PINN baselines]
\label{rmk:baseline-choice}
In Section~\ref{sec:results}, we compare AdamFLIP with AL-PINN and trSQP-PINN as two representative optimizer-level constrained PINN baselines. AL-PINN represents augmented-Lagrangian methods, which enforce constraints through multiplier and penalty updates, while trSQP-PINN represents SQP/trust-region methods, which enforce constraints by solving local constrained subproblems. These two methods are therefore the closest classical constrained-optimization counterparts to AdamFLIP.

We do not use architecture-based hard-constraint methods as primary baselines because they change the hypothesis class through problem-specific trial functions, embeddings, or projection layers. Such comparisons would mix the effect of the optimizer with the effect of a specialized constrained parameterization. We also include standard PINN with Adam as the soft-penalty baseline and FL-PINN as an ablation of AdamFLIP without Adam-style adaptive moments.
\end{remark}
\section{Discussion on the Limitations}
While AdamFLIP shows strong empirical performance, several limitations remain. First, our convergence analysis applies to a variant of AdamFLIP rather than the original AdamFLIP used in the experiments. The original algorithm computes the feedback-linearization multiplier in the Euclidean metric but applies an Adam-style preconditioned update, creating a metric mismatch that complicates the analysis. Second, the analysis assumes that $J_h(\theta)$ has full row rank, which may fail in PINN training, especially near exact constraint satisfaction. We mitigate possible ill-conditioning in practice by using the damped solve $J_h(\theta_t)J_h(\theta_t)^\top+\epsilon I$, but extending the theory to rank-deficient or relaxed constraint settings remains open. Finally, AdamFLIP requires choosing which losses are treated as objectives and which are imposed as constraints. Our choices work well for the tested benchmarks, but other PDEs, noisy or sparse data, and higher-dimensional problems may require different formulations and further empirical validation.

\section{Detailed Experimental Setup and Additional Results}
\label{app:detailed_results}
This appendix provides the full experimental details for the numerical results reported in Section~\ref{sec:results}. We summarize the shared experimental settings below; problem-specific settings (architecture, collocation points, inverse problem parameterization) are given in each subsequent subsection.

\subsection{Physics-Informed Neural Networks (PINNs)}
\paragraph{PINN architecture.}
Figure~\ref{fig:pinn-illustration} illustrates the PINN framework used throughout our experiments. The neural network takes spatiotemporal coordinates as input, processes them through multiple fully connected hidden layers with $\tanh$ activations, and outputs the predicted solution field $\hat{u}$. For inverse problems, the unknown PDE parameters (e.g., $\kappa_1$, $\kappa_2$, $\alpha$) are included as learnable scalars in the computational graph and co-optimized with the network weights $\theta$. All required spatial and temporal derivatives are computed via automatic differentiation.

\begin{figure}[htbp]
    \centering
    \includegraphics[width=.9\linewidth]{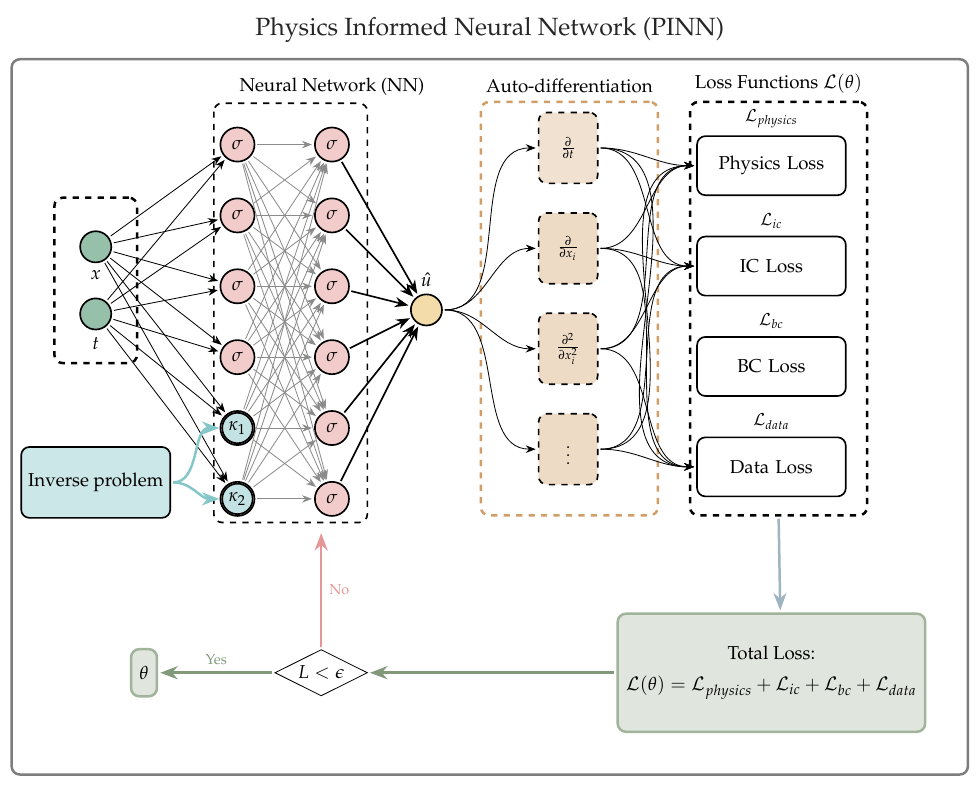}
    \caption{Illustration of the PINN framework. The network maps spatiotemporal inputs to the solution field $\hat{u}$; automatic differentiation provides exact derivatives for constructing the physics, initial condition, and boundary condition losses. In inverse problems, unknown PDE parameters are jointly optimized.}
    \label{fig:pinn-illustration}
\end{figure}

\paragraph{Training procedure.}
For each test problem, all five methods (Standard PINN, AL-PINN, trSQP-PINN, FL-PINN, and AdamFLIP) use an identical network architecture, the same training data, and the same stopping criterion: training terminates when the total loss falls below a predefined threshold $\epsilon$, or a maximum iteration count is reached. The Adam optimizer~\cite{kingma2014adam} is used for the unconstrained baseline (Standard PINN) and as the base optimizer within AdamFLIP, with hyperparameters $\beta_1=0.9$, $\beta_2=0.999$.

\paragraph{Evaluation metrics.}
We report: (i) the relative $L^2$ error between the predicted and analytical solutions on a held-out test grid; (ii) the constraint errors (CE), computed as the mean squared error (MSE) at the initial condition (IC), boundary condition (BC), physics residual (Phy), and---for inverse problems---data loss (DL) collocation points; (iii) the absolute parameter recovery error for inverse problems; and (iv) the training wall-clock time (WT), measured on an HPC node equipped with an NVIDIA H100 GPU (80\,GB).

\paragraph{Choice of Objectives and Constraints in the Algorithm}

For the forward problem, we choose the boundary loss and initial loss as constraints because the boundary and initial conditions define the admissible solution space of the PDE. If these conditions are not well satisfied, the learned solution may be physically inconsistent even when the residual loss is small inside the domain. Therefore, we optimize the physics residual as the main objective while enforcing the boundary and initial losses as constraints. This design encourages the model to reduce the PDE residual without violating the prescribed boundary and initial behaviors.

For the inverse problem, we use the data loss as the objective and treat the physics, boundary, and initial losses as constraints. This is because the main goal of the inverse problem is to fit the observed data and recover the unknown quantities, while the PDE residual and prescribed conditions should still be satisfied to ensure physical consistency. In this setting, the data loss drives parameter identification, and the constraints prevent the learned solution from deviating from the governing equation and known problem conditions.

We performed experiments where all loss terms were treated as constraints with the target loss value set to zero. The final experimental results show that the above objective--constraint choices provide the best overall performance, giving a better balance between accuracy, constraint satisfaction, and training stability.

\paragraph{Constraint Error (CE)} To evaluate the physical consistency of our learned solutions, we adopt the constraint error metric introduced by \cite{utkarsh2025physics}. Specifically, we compute the $\ell_2$ norm of the residuals evaluated by the constraint functions $\mathcal{R}_*$ on each predicted sample $\hat{u}^{(n)}$, and average these norms across all $N$ samples:

\[
\text{CE}(*) = \frac{1}{N} \sum_{n=1}^N \left\| \mathcal{R}_* \left( \hat{u}^{(n)} \right) \right\|_2, \quad * \in \{\text{IC}, \text{BC}, \text{Phy}\}.
\]

Here, the residual functions $\mathcal{R}_*$ quantify the violations of the prescribed Dirichlet initial and boundary conditions, as well as conservation laws.

\paragraph{Collocation sampling.}
Training collocation points are sampled uniformly at random from the respective domains: $\mathcal{D}_{\text{ic}}$ at $t=0$, $\mathcal{D}_{\text{bc}}$ along the spatial boundaries over time, and $\mathcal{D}_{\text{phy}}$ in the spatiotemporal interior. The specific numbers of collocation points ($N_{\text{ic}}$, $N_{\text{bc}}$, $N_f$) vary by problem and are detailed in each subsection below.

\subsection{One-dimensional Burgers' Equation \label{append:burgers}}
The one-dimensional viscous Burgers' equation is a nonlinear partial differential equation frequently used as a benchmark \cite{karniadakis2021physics}. The Burgers' equation with Dirichlet boundary conditions on the spatial domain $\Omega = [-1,1]$ and time domain $[0,T]$ yields the following:
\begin{align}
&\frac{\partial u}{\partial t} - \nu \frac{\partial^2 u}{\partial x^2} + u \frac{\partial u}{\partial x} = 0,\quad x \in \Omega,\; t \in [0,T],\label{eq:burgers}\\[6pt]
&u(x,t) = 0,\quad \forall x \in \partial\Omega,\\[6pt]
&u(x,0) = -\sin(\pi x).
\end{align}
Here, $u(x,t)$ represents the solution over space and time, and the viscosity $\nu$ is chosen to be $0.01/\pi$. The benchmark solution clearly displays the formation and evolution of steep gradients over time.

\subsubsection{Numerical Settings}
\paragraph{Neural network architecture} 
To ensure a fair and consistent comparison across all different models, we employ an identical deep neural network architecture. Specifically, each model utilizes a fully connected network consisting of $8$ hidden layers with $20$ neurons per layer. During the training phase, each model was trained until its loss dropped below the predefined threshold, $\epsilon$.

\paragraph{Training and testing data} 
In the training stage, the IC and BC losses are evaluated using data sampled in space at the initial time $t = 0$ and in time along the spatial boundaries ($x = -1$ and $x = 1$). In contrast, the physics loss is computed on $100$ unlabeled collocation points randomly selected from the interior of the spatiotemporal domain. During the testing stage, the evaluation data are sampled over the spatiotemporal domain $\Omega \times [0,T]$, where $\Omega = [0,1]$. The data is generated on a uniform grid with a spatial mesh size of $\Delta x = 0.01$ and a temporal step size of $\Delta t = 0.01$.

\paragraph{Quantitative comparison metric} To quantitatively assess and compare the performance across all tested models, we calculate the mean squared error (MSE) between the benchmark solution and the model prediction.
\paragraph{Acronyms of all different models}~\\

\begin{table}[H]
\centering
\caption{Acronyms of all different models in numerical results.}
\label{tab:model_acronyms}
\small
\begin{tabular}{llc}
\toprule
{Model Acronym} & {Full name} & Reference\\
\midrule
PINN & Physics-Informed Neural Network&\cite{raissi2019physics} \\
AL-PINN &PINN With Augmented Lagrangian Relaxation Method & \cite{son2023enhanced}  \\
trSQP-PINN & PINN With Trust-Region Sequential Quadratic Programming &\cite{cheng2024physics}  \\
FL-PINN & Feedback linearization PINN & n/a\\
AdamFLIP & Adam feedback linearization PINN &n/a\\
\bottomrule
\end{tabular}
\end{table}

\subsubsection{Forward Problem \label{sec:burgers-forward}}

In this section, we consider the forward problem setting where the physics loss is fully specified and does not involve any unknown parameters to be inferred. The objective is therefore to approximate the solution accurately while ensuring consistency with the prescribed initial condition, boundary condition, and governing equation.

Table~\ref{tab:burgers_forward_results} presents a quantitative comparison of standard PINN (Adam), AL-PINN, trSQP-PINN, FL-PINN, and AdamFLIP for the Burgers' equation, measured in terms of the relative $L^2$ error and the constraint errors (CE) for the initial condition (IC), boundary condition (BC), and physics loss (PL). Among all models considered, AdamFLIP performs best as achieving a relative $L^2$ error that is approximately one order of magnitude smaller than those of the other models while also yielding the lowest constraint errors (CE) in most cases. Although trSQP-PINN attains the smallest physics-loss CE, which is several orders of magnitude lower than those of the other models, this advantage does not translate into improved relative $L^2$ error accuracy or better performance in the other constraint errors. It is also worth noting that all models yield comparable wall-clock times, indicating that the improved performance of AdamFLIP is achieved without introducing additional computational cost.

\begin{table}[h]
\centering
\setlength{\tabcolsep}{4pt} 
\caption{Quantitative error comparison for the 1D Burgers equation (forward problem). We report relative $L^2$ errors, constraint errors (CE) at initial (IC), boundary (BC), and physics (Phy) collocation points, and training wall-clock time (WT).}
\label{tab:burgers_forward_results}
\resizebox{\textwidth}{!}{
\begin{tabular}{lcccccc}
\toprule
\textbf{Method} & \textbf{Constraint} & \textbf{Rel. $L^2$} & \textbf{CE (IC)} & \textbf{CE (BC)} & \textbf{CE (Phy)} & \textbf{WT (s)} \\
\midrule
Standard PINN & \xmark  
& $2.51\times10^{-1}$ 
& $4.26\times10^{-3}$ 
& $2.44\times10^{-3}$ 
& $1.49\times10^{-3}$ 
& \textbf{113} \\

AL-PINN & \cmark  
& $8.20\times10^{-2}$ 
& $2.34\times10^{-3}$ 
& $6.25\times10^{-5}$ 
& $2.97\times10^{-3}$ 
& 155 \\

trSQP-PINN & \cmark  
& $4.43\times10^{-1}$ 
& $9.91\times10^{-2}$ 
& $9.80\times10^{-2}$ 
& $\mathbf{1.31\times10^{-3}}$ 
& 625 \\

FL-PINN & \cmark  
& $1.06\times10^{-1}$ 
& $6.82\times10^{-4}$ 
& $4.40\times10^{-4}$ 
& $1.13\times10^{-2}$ 
& 133 \\

\textbf{AdamFLIP} & \cmark  
& $\mathbf{3.12\times10^{-2}}$ 
& $\mathbf{1.00\times10^{-4}}$ 
& $\mathbf{1.00\times10^{-4}}$ 
& $2.60\times10^{-3}$ 
& 115 \\
\bottomrule
\end{tabular}%
}
\end{table}
\raggedbottom

\begin{figure}[H]
    \centering
    \begin{subfigure}[b]{\linewidth}
        \centering
        \includegraphics[width=\linewidth]{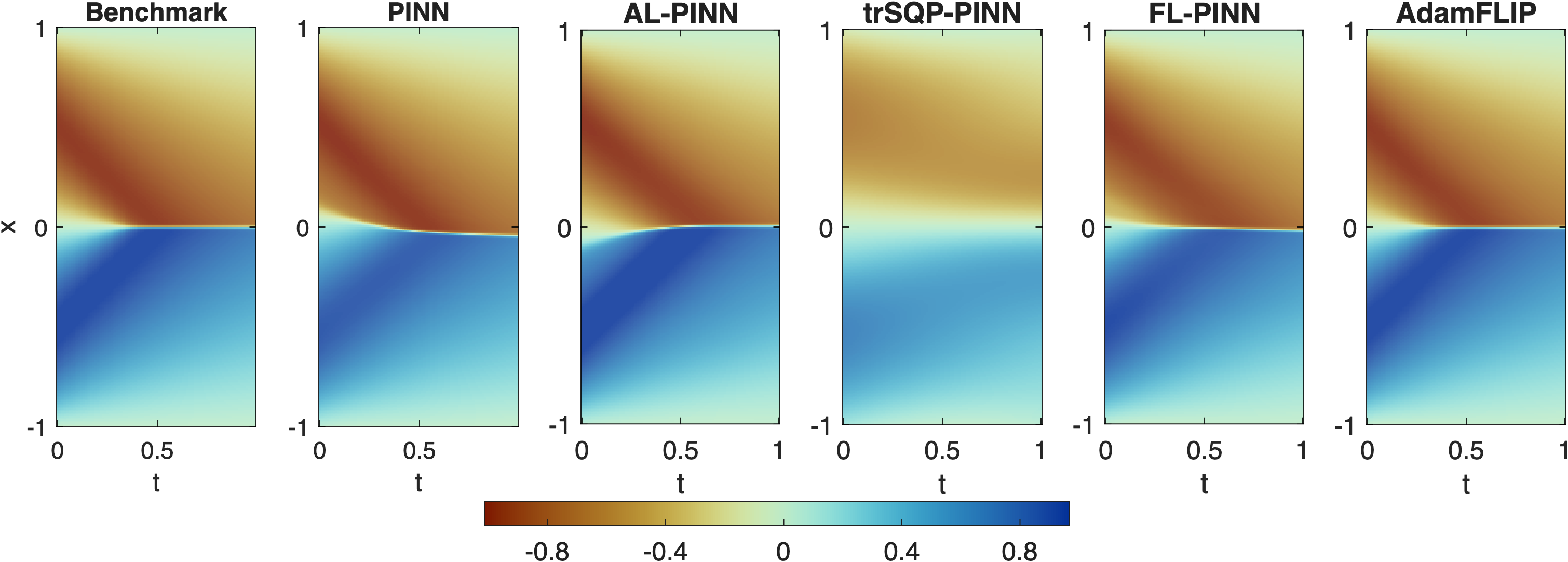}
        \caption{Spatiotemporal fields}
        \label{fig:burgers-forward}
    \end{subfigure}
    \begin{subfigure}[b]{.8\linewidth}
        \centering
        \includegraphics[width=\linewidth]{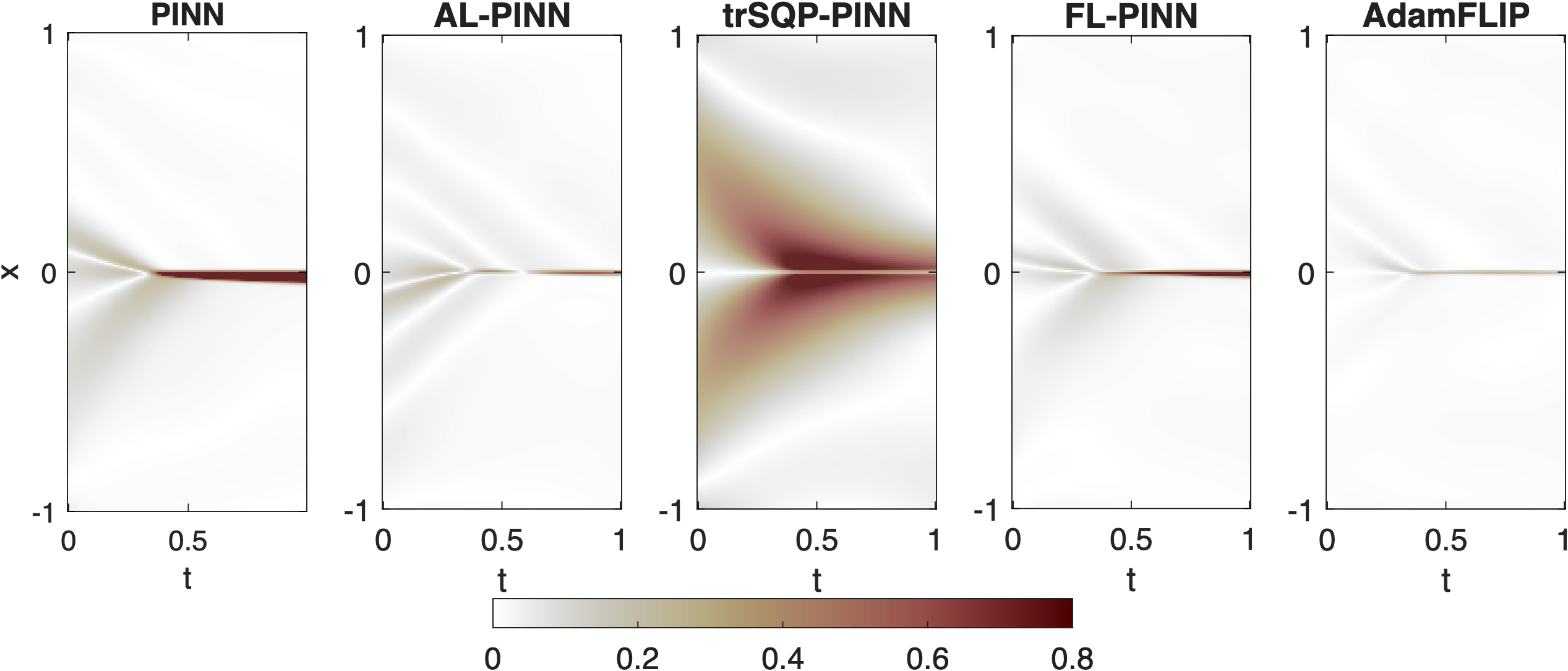}
        \caption{Absolute errors}
        \label{fig:burgers-forward-error}
    \end{subfigure}
    \caption{Comparison of spatiotemporal solutions and absolute errors for the one-dimensional Burgers' equation. (a) Predicted fields $u(x,t)$ from the benchmark solution and from Standard PINN, FL-PINN, trSQP-PINN, and AdamFLIP. (b) Corresponding absolute error distributions for each method. }
    \label{fig:forward-burgers}
\end{figure}

This is also confirmed in Figure \ref{fig:forward-burgers}. In particular, Figure~\ref{fig:burgers-forward} compares the spatiotemporal fields predicted by different models with the benchmark solution (first panel). Compared with the benchmark solution, the standard PINN (second panel) captures the overall spatiotemporal structure but exhibits noticeable smoothing in regions that should contain sharp gradients. The FL-PINN (third panel) improves the representation of these features by more accurately capturing the solution structure along the characteristic directions of the dynamics, while AdamFLIP (forth panel) performs best overall, effectively preserving both the magnitude and the sharpness of the solution profile.

Figure~\ref{fig:burgers-forward-error} further supports these observations by presenting the spatiotemporal absolute error distributions for the different models.It is clearly observed that AdamFLIP attains the smallest error throughout the entire spatiotemporal domain, with substantially smaller errors in the shock regions than the other models. FL-PINN reduces these errors by incorporating feedback linearization, yet residual discrepancies remain visible. In contrast, AdamFLIP achieves the lowest absolute error throughout the domain, with errors significantly suppressed in both magnitude and spatial extent. These observations are consistent with the quantitative results in Table~\ref{tab:burgers_forward_results}, confirming that AdamFLIP provides superior accuracy while enforcing physical constraints more effectively.

\subsubsection{Inverse Problem \label{sec:burgers-inverse}}
In the inverse problem setting, we assume that there are two unknown PDE parameters $\kappa_1$ and $\kappa_2$ in Eq.~\eqref{eq:burgers} , which yields the following setting: 
\begin{align}
&\frac{\partial u}{\partial t}  + \kappa_1 u\frac{\partial u}{\partial x} =  \kappa_2 \frac{\partial^2 u}{\partial x^2},\quad x \in \Omega,\; t \in [0,T], \label{eq:burgers-inverse-1}\\[6pt]
&u(x,t) = 0,\quad \forall x \in \partial\Omega,\label{eq:burgers-inverse-2}\\[6pt]
&u(x,0) = -\sin(\pi x).\label{eq:burgers-inverse-3}
\end{align}
Within the PINN frameworks, $\kappa_1$ and $\kappa_2$ are parameterized as learnable scalars, which are co-optimized with the neural network weights by minimizing the loss function same as in the forward problem. In this setup, for all models we initialize the parameter estimates at $\kappa_1 =2,\kappa_2=0$, whereas their ground truth values are $\kappa_1 = 1 $ and$\kappa_2= \nu = 0.01/{\pi} \approx 0.00318$.

Table~\ref{tab:inverse_params} shows the learned PDE parameters and their $L^1$ error compared to the ground truth for the Burgers' equation in the inverse problem setting. It is shown that among all models, AdamFLIP demonstrates the highest accuracy by achieving the lowest $L^1$ errors in estimating both $\kappa_1$ and $\kappa_2$. Specifically, it reduces the estimation error of $\kappa_1$ by nearly an order of magnitude compared to the next best model, FL-PINN, while simultaneously improving the estimation accuracy for $\kappa_2$ by at least one-third relative to the other methods. We use the same neural network structure same as used in forward problem but treat the unknown parameters as the trainable parameters in the neural network. Notably, trSQP-PINN exhibits the largest error in $\kappa_1$ ($0.9821$), indicating unstable or inaccurate parameter identification. By enforcing parameter bounds and adaptively balancing constraints, AdamFLIP substantially improves robustness and accuracy in parameter discovery.

\begin{table}[h]
\centering
\caption{Parameter estimation results for the one-dimensional Burgers' equation in the inverse problem setting. The exact PDE parameters are $\kappa_1=1$ and $\kappa_2=0.003183$.}
\label{tab:inverse_params}
\begin{tabular}{lcccc}
\toprule
\textbf{Method} & \textbf{$\hat{\kappa}_1$} & \textbf{$|\hat{\kappa}_1-\kappa_1|$} & \textbf{$\hat{\kappa}_2$} & \textbf{$|\hat{\kappa}_2-\kappa_2|$} \\
\midrule
Standard PINN & $0.9667$ & $0.0333$ & $0.0097$ & $0.0065$ \\
AL-PINN & $1.5940$ & $0.5940$ & $0.0829$ & $0.0797$ \\
trSQP-PINN & $1.9821$ & $0.9821$ & $0.0222$ & $0.0190$ \\
FL-PINN & $0.8723$ & $0.1277$ & $0.0996$ & $0.0965$ \\
\textbf{AdamFLIP} & $\mathbf{1.0154}$ & $\mathbf{0.0154}$ & $\mathbf{0.0067}$ & $\mathbf{0.0036}$ \\
\bottomrule
\end{tabular}
\end{table}

Similar to Table~\ref{tab:burgers_forward_results}, Table \ref{tab:inverse_results} presents a quantitative comparison of standard PINN (Adam), AL-PINN, trSQP-PINN, FL-PINN, and AdamFLIP for the Burgers' equation in the inverse problem setting, where the relative $L^2$ error and the constraint errors (CE) for the initial condition (IC), boundary condition (BC), and physics loss (PL) are compared. Consistent with the forward problem setting, AdamFLIP performs best, achieving a relative $L^2$ error that is approximately one order of magnitude smaller than those of the other models. Furthermore, it yields the lowest constraint errors in most categories. For example, while AL-PINN and FL-PINN manage to reduce the data loss constraint error, CE(DL), to roughly half of that produced by the standard PINN, AdamFLIP minimizes this error by nearly two orders of magnitude relative to the baseline. Similarly, AdamFLIP reduces the initial and boundary constraint errors to approximately one-tenth of those observed in the AL-PINN and FL-PINN models. While constrained models incur overhead in computational cost, i.e., AL-PINN, FL-PINN and trSQP-PINN take about one-third longer to train than standard PINN, AdamFLIP remains relative efficient.

Consistent with the quantitative results in Tables~\ref{tab:inverse_params}--\ref{tab:inverse_results}, Figure~\ref{fig:inverse-burgers} presents a qualitative comparison of the reconstructed spatiotemporal solutions and their corresponding absolute errors. The benchmark solution is characterized by a sharp transition near $x=0$, a feature that AdamFLIP accurately captures across the entire temporal domain. In contrast, the standard PINN fails to resolve the underlying physical structure, instead converging to a trivial, highly diffuse profile that neglects the sharp interface. While FL-PINN improves upon the baseline, it still exhibits noticeable discrepancies near the transition region. These differences are further emphasized in the absolute error plots: the standard PINN shows significant error concentrations around the discontinuity, and FL-PINN achieves only moderate error reduction, whereas AdamFLIP maintains uniformly low errors throughout the entire domain. Overall, AdamFLIP proves to be the most effective framework, significantly outperforming other models with or without constraints by accurately identifying the governing physics while reconstructing the underlying spatiotemporal dynamics with moderate computational overhead.

\begin{table}[h]
\centering
\setlength{\tabcolsep}{4pt} 
\caption{Quantitative error comparison for the 1D Burgers equation (inverse problem). We report relative $L^2$ errors, constraint errors (CE) at initial (IC), boundary (BC), physics (Phy), and data loss (DL) collocation points, and training wall-clock time (WT).}
\label{tab:inverse_results}
\resizebox{\textwidth}{!}{
\begin{tabular}{lccccccc}
\toprule
\textbf{Method} & \textbf{Constraint} & \textbf{Rel. $L_2$} & \textbf{CE (IC)} & \textbf{CE (BC)} & \textbf{CE (Phy)} & \textbf{CE (DL)} & \textbf{WT (s)} \\
\midrule
Standard PINN & \xmark 
& $5.46\times10^{-1}$ 
& $9.00 \times 10^{-3}$ 
& $\mathbf{1.00\times10^{-4}}$ 
& $\mathbf{1.00\times10^{-4}}$ 
& $1.13\times10^{-1}$ 
& 248 \\

AL-PINN & \cmark 
& $4.02\times10^{-1}$ 
& $3.06\times10^{-2}$ 
& $2.73\times10^{-2}$ 
& $2.48\times10^{-2}$ 
& $5.94\times10^{-2}$ 
& 334 \\

trSQP-PINN & \cmark 
& $4.54\times10^{-1}$ 
& $4.41\times10^{-1}$ 
& $2.56\times10^{-1}$ 
& $5.20\times10^{-3}$ 
& $4.07\times10^{-1}$ 
& 621 \\

FL-PINN & \cmark 
& $4.19\times10^{-1}$ 
& $2.26\times10^{-2}$ 
& $2.04\times10^{-2}$ 
& $1.80\times10^{-3}$ 
& $6.62\times10^{-2}$ 
& 334 \\

\textbf{AdamFLIP} & \cmark 
& $\mathbf{5.74\times10^{-2}}$ 
& $\mathbf{2.90\times10^{-3}}$ 
& $3.50\times10^{-3}$ 
& $2.11\times10^{-2}$ 
& $\mathbf{1.20\times10^{-3}}$ 
& 324 \\
\bottomrule
\end{tabular}%
}
\end{table}

\begin{figure}[H]
    \centering
    \begin{subfigure}[b]{\linewidth}
        \centering
        \includegraphics[width=\linewidth]{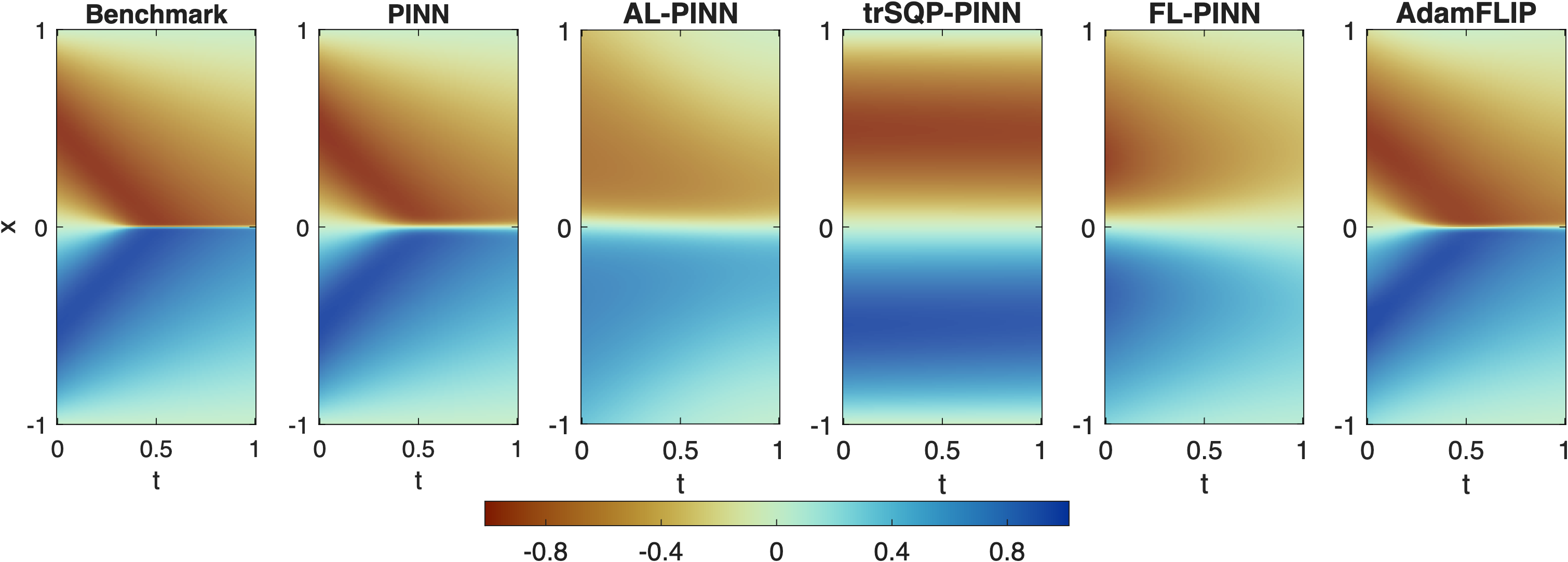}
        \caption{Spatiotemporal fields}
        \label{fig:burgers-inverse}
    \end{subfigure}
    \begin{subfigure}[b]{.8\linewidth}
        \centering
        \includegraphics[width=\linewidth]{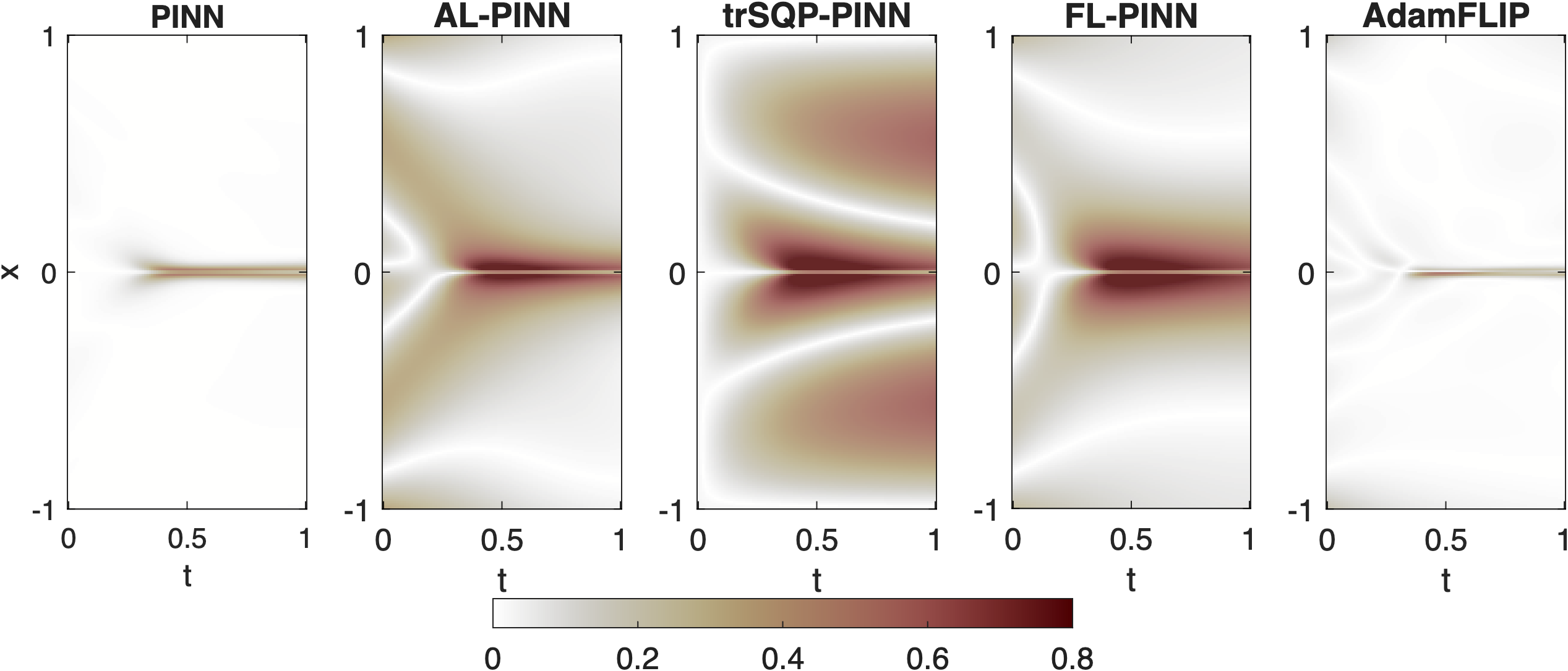}
        \caption{Absolute errors}
        \label{fig:burgers-inverse-error}
    \end{subfigure}
    \caption{Comparison of spatiotemporal solutions and absolute errors for the inverse one-dimensional Burgers' equation. (a) Predicted fields $u(x,t)$ from the benchmark solution and from Standard PINN, FL-PINN, trSQP-PINN ,and AdamFLIP. (b) Corresponding absolute error distributions for each method.  }
    \label{fig:inverse-burgers}
\end{figure}

\subsection{Time-fractional Mixed Diffusion-Wave Equations (TFMDWEs)}
The third example we consider is the time-fractional mixed diffusion-wave equation (TFMDWE). This class of equations is widely used to model anomalous transport processes and memory effects in complex media, such as viscoelastic materials, where the dynamics transition between pure diffusion and wave propagation \cite{luchko2011fractional}. Specifically, we consider the tTFMDWEs on the spatial domain $\Omega = (0, \pi)$ with homogeneous Dirichlet boundary conditions:
\begin{align}
& D_t^{\alpha} u(x,t) = \frac{\partial^2 u}{\partial x^2} + f(x,t), \quad t \in [0,1],\; x \in \Omega \equiv [0,\pi],\\[6pt]
&u(x,t) = 0,\quad \forall x \in \partial\Omega,\\[6pt]
& u(x, 0) = 0, \quad x \in \Omega,
\end{align}
where $ D_t^{\alpha}$ denotes the Caputo fractional derivative of order $\alpha \in (0,1]$. In addition, we prescribe the forcing term as:
\begin{equation}
f(x,t) = \left[ \frac{\Gamma(4)}{\Gamma(4-\alpha)} t^{3-\alpha} + t^3 \right] \sin(x),
\end{equation}
which corresponds to the analytical solution $u(x,t) = t^3 \sin(x)$. Here, $\Gamma(\cdot)$ represents the Euler gamma function.

\subsubsection{PINN's settings}
\paragraph{Neural network architecture} 
To ensure a fair and consistent comparison across all different models, we employ an identical deep neural network architecture. Specifically, each model utilizes a fully connected network consisting of $3$ hidden layers with $100$ neurons per layer. During the training phase, each model was trained until its loss dropped below the predefined threshold, $\epsilon$.

\paragraph{Training and testing data} 
In the training stage, the initial condition (IC) and boundary condition (BC) losses are evaluated using data uniformly sampled in space at $t = 0$ and in time along the spatial boundaries ($x = 0$ and $x = \pi$), with $N_{\text{ic}} = 100$ and $N_{\text{bc}} = 100$ points, respectively. In contrast, the physics loss for the TFMDWE is computed on $N_f = 5000$ interior collocation points randomly selected from the spatiotemporal domain $(0,\pi) \times (0,1]$. Additionally, $N_{\text{data}} = 100$ synthetic observation points are sampled from the exact solution $u(x,t) = t^3 \sin(x)$ to inform the training process. During the testing stage, the evaluation data are sampled over the spatiotemporal domain $\Omega \times [0,T]$, where $\Omega = [0,\pi]$ and $T=1$. The data is generated on a uniform grid with a spatial mesh size of $\Delta x = \pi/100$ and a temporal step size of $\Delta t = 0.01$, resulting in a $100 \times 100$ grid for quantitative performance assessment.

\subsubsection{Forward}
Table~\ref{tab:fpde_forward_results} reports the numerical performance of Standard PINN, AL-PINN, trSQP-PINN, FL-PINN, and AdamFLIP on the forward fractional PDE problem. Overall, AdamFLIP provides the best accuracy and constraint satisfaction, achieving the lowest relative $L_2$ error of $2.85 \times 10^{-1}$ while substantially reducing the constraint errors to $\mathrm{CE(IC)} = 1.42 \times 10^{-5}$ and $\mathrm{CE(BC)} = 1.47 \times 10^{-4}$. In comparison, FL-PINN improves over the baselines with a relative error of $4.26 \times 10^{-1}$, but its constraint errors remain larger ($\mathrm{CE(IC)} = 1.36 \times 10^{-4}$, $\mathrm{CE(BC)} = 4.20 \times 10^{-3}$) and its physics-loss constraint error is higher than AdamFLIP. Both AL-PINN and trSQP-PINN reduce the relative error compared to Standard PINN, yet they exhibit markedly poorer physics-loss enforcement, with $\mathrm{CE(PL)}$ on the order of $10^{-1}$--$10^{0}$, indicating unstable or imbalanced training. Standard PINN performs worst overall, yielding a large relative error of $7.71 \times 10^{-1}$ with comparatively weak constraint satisfaction. Moreover, AdamFLIP achieves these improvements with the lowest wall-clock time ($50\,\mathrm{s}$), demonstrating a favorable trade-off between accuracy, constraint enforcement, and computational efficiency for forward fractional PDE learning.

\begin{table}[h]
\centering
\setlength{\tabcolsep}{4pt} 
\caption{Quantitative error comparison for the forward fractional PDE. We report relative $L^2$ errors, constraint errors (CE) at initial (IC), boundary (BC), and physics (Phy) collocation points, and training wall-clock time (WT).}
\label{tab:fpde_forward_results}
\resizebox{\textwidth}{!}{
\begin{tabular}{lcccccc}
\toprule
\textbf{Method} & \textbf{Constraint} & \textbf{Rel. $L_2$} & \textbf{CE (IC)} & \textbf{CE (BC)} & \textbf{CE (Phy)} & \textbf{WT (s)} \\
\midrule
Standard PINN & \xmark 
& $7.71\times10^{-1}$ 
& $1.65\times10^{-3}$ 
& $2.19\times10^{-2}$ 
& $1.07\times10^{-1}$ 
& 56 \\

AL-PINN & \cmark 
& $4.93\times10^{-1}$ 
& $1.11\times10^{-3}$ 
& $5.16\times10^{-2}$ 
& $7.88\times10^{-1}$ 
& 53 \\

trSQP-PINN & \cmark 
& $4.56\times10^{-1}$ 
& $1.69\times10^{-3}$ 
& $2.90\times10^{-2}$ 
& $8.13\times10^{-1}$ 
& 105 \\

FL-PINN & \cmark 
& $4.26\times10^{-1}$ 
& $1.36\times10^{-4}$ 
& $4.20\times10^{-3}$ 
& $1.19\times10^{-1}$ 
& 52 \\

\textbf{AdamFLIP} & \cmark 
& $\mathbf{2.85\times10^{-1}}$ 
& $\mathbf{1.42\times10^{-5}}$ 
& $\mathbf{1.47\times10^{-4}}$ 
& $\mathbf{8.87\times10^{-2}}$ 
& \textbf{50} \\
\bottomrule
\end{tabular}%
}
\end{table}

Figure~\ref{fig:forward-fpde} visualizes the spatiotemporal solution fields and corresponding absolute error maps for the forward one-dimensional time-fractional mixed diffusion--wave equation. As shown in Fig.~7(a), the benchmark solution exhibits a smooth high-amplitude region concentrated near the upper time boundary, with the strongest response occurring around the center of the spatial domain. The Standard PINN prediction in Fig.~7(b) deviates noticeably from the benchmark, particularly in the high-gradient region near the top boundary, where the solution amplitude and spread are misrepresented. This discrepancy is reflected in the error map in Fig.~7(c), where the Standard PINN displays pronounced localized errors concentrated near the upper portion of the domain, indicating difficulty in resolving the sharp spatiotemporal transition governed by the fractional dynamics.

In contrast, FL-PINN produces a solution field that more closely matches the benchmark structure, reducing the mismatch in the dominant high-amplitude region. The corresponding absolute error map shows a substantial reduction in error magnitude and spatial extent compared with the Standard PINN, suggesting that feedback linearization improves optimization stability and accelerates convergence toward a physically consistent solution. AdamFLIP yields the closest agreement with the benchmark among all methods. Its predicted spatiotemporal field preserves both the location and smooth shape of the dominant response region, and its absolute error map exhibits the smallest and most spatially confined residuals. In particular, the remaining errors are primarily limited to a narrow band near the upper time boundary, consistent with a challenging region where the solution gradients are largest and where PINN training typically struggles. Overall, Fig.~\ref{fig:forward-fpde} demonstrates that adaptive feedback linearization not only improves solution accuracy relative to the Standard PINN but also yields more uniform and reduced errors across the full spatiotemporal domain, aligning with the quantitative improvements reported in Table~\ref{tab:fpde_forward_results}.

\begin{figure}[H]
    \centering
   \begin{subfigure}[b]{0.48\linewidth}
        \centering
        \includegraphics[width=\linewidth]{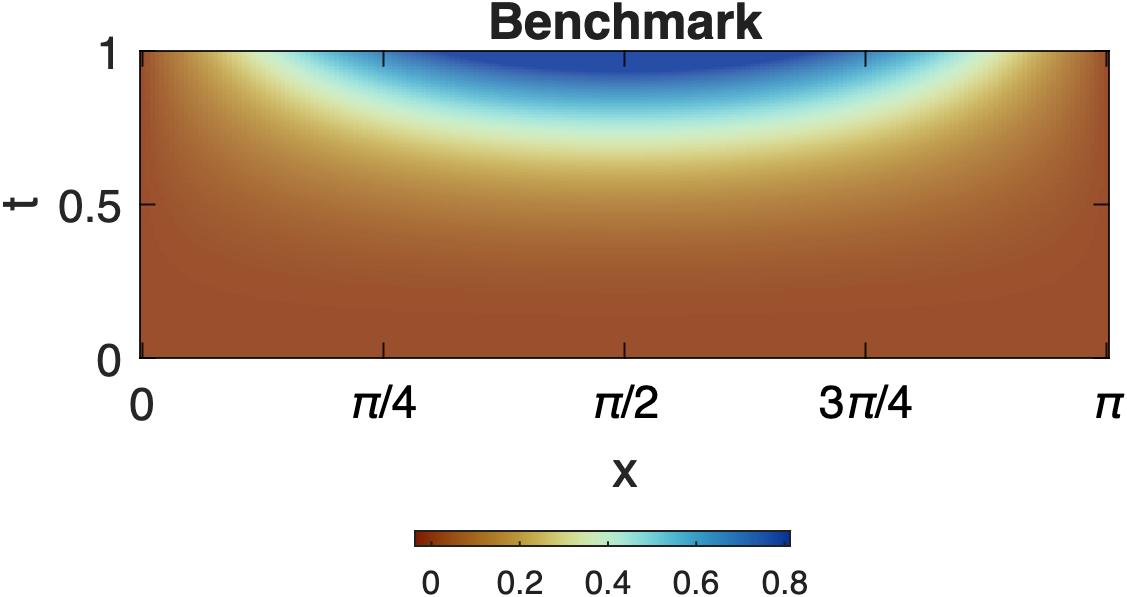}
        \caption{Spatiotemporal field of Benchmark}
        \label{fig:fpde-forward}
    \end{subfigure}
    \\
    \begin{subfigure}[b]{0.48\linewidth}
        \centering
        \includegraphics[width=\linewidth]{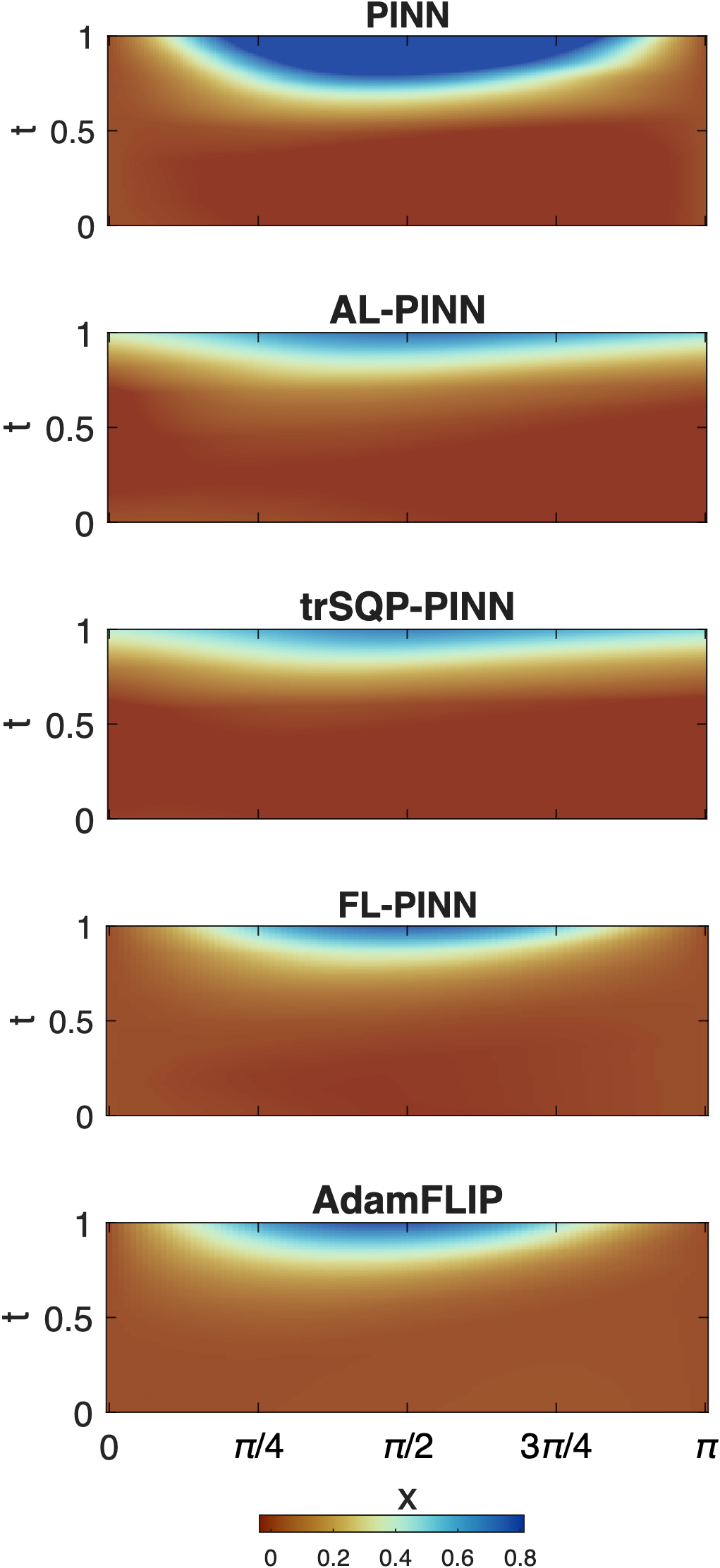}
        \caption{Spatiotemporal fields of different models}
        \label{fig:fpde-forward}
    \end{subfigure}\hfill
    \begin{subfigure}[b]{0.48\linewidth}
        \centering
        \includegraphics[width=\linewidth]{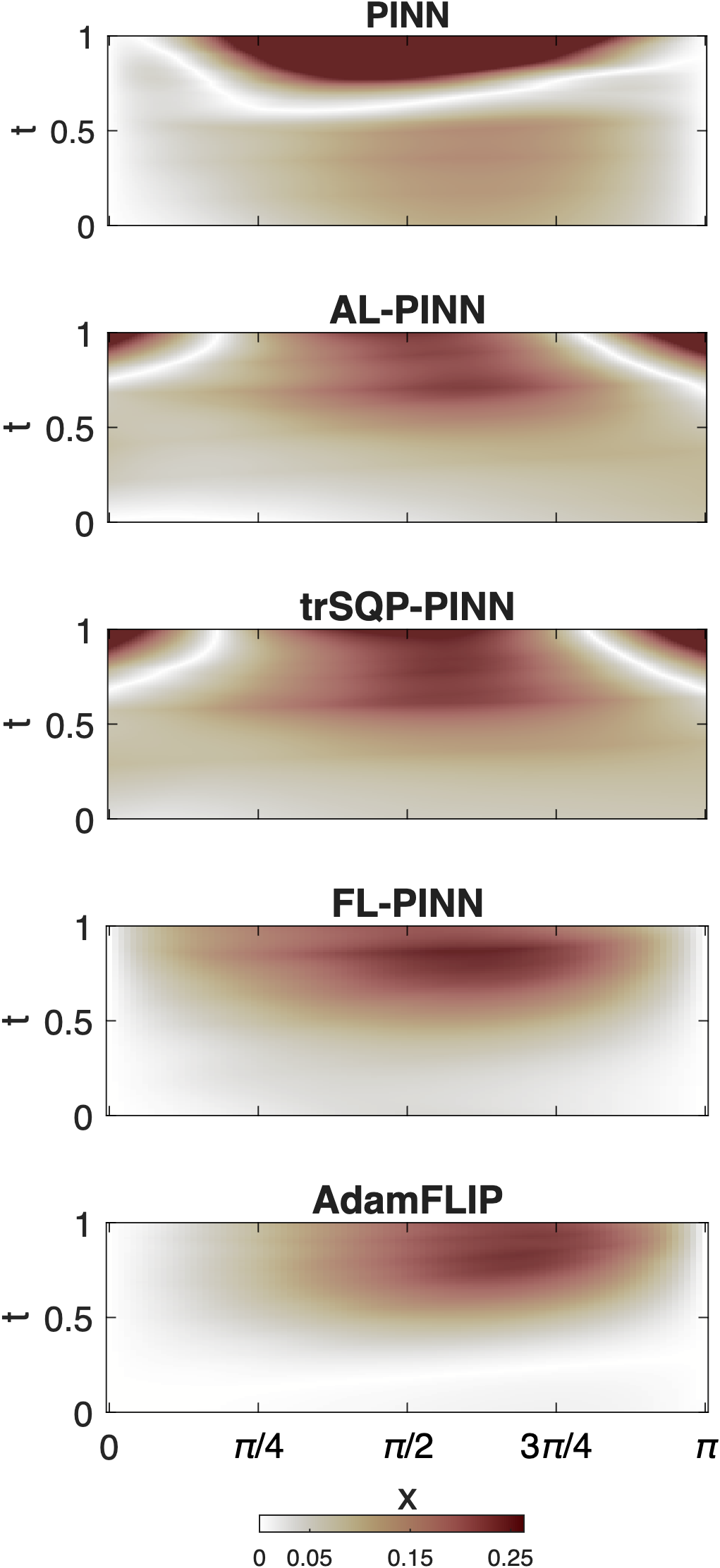}
        \caption{Absolute errors}
        \label{fig:fpde-forward-error}
    \end{subfigure}
    \caption{Forward time-fractional mixed diffusion--wave equation (1D) results: benchmark spatiotemporal field and model predictions. (a) Reference solution $u(x,t)$. (b) Predicted spatiotemporal fields obtained by Standard PINN, FL-PINN, trSQP-PINN, and AdamFLIP. (c) Corresponding absolute error maps , showing that AdamFLIP yields the smallest and most localized errors, especially near the high-gradient region at large $t$.}
    \label{fig:forward-fpde}
\end{figure}

\subsubsection{Inverse}
Table~\ref{tab:fpde_inverse_results} reports a quantitative comparison for the inverse fractional PDE problem across Standard PINN, AL-PINN, trSQP-PINN, FL-PINN, and AdamFLIP in terms of the relative $L_2$ error, constraint errors for the initial condition (IC), boundary condition (BC), physics loss (PL), and data loss (DL), as well as wall-clock time. Overall, AdamFLIP achieves the best performance, attaining the lowest relative $L_2$ error of $3.95 \times 10^{-2}$ and the smallest data loss $\mathrm{CE(DL)} = 1.13 \times 10^{-4}$, indicating substantially improved data fitting and more reliable parameter identification compared with all baselines.

The Standard PINN exhibits moderate solution accuracy ($2.22 \times 10^{-1}$) but suffers from noticeably larger violations of the boundary condition and physics constraints ($\mathrm{CE(BC)} = 2.27 \times 10^{-2}$, $\mathrm{CE(PL)} = 7.01 \times 10^{-2}$). AL-PINN and trSQP-PINN perform poorly in this inverse setting, producing large relative errors on the order of $10^{-1}$--$10^{0}$ and comparatively large data losses, suggesting that they either overemphasize certain constraints or converge to suboptimal parameter values. FL-PINN partially improves constraint enforcement, achieving a small $\mathrm{CE(IC)} = 1.56 \times 10^{-5}$ and $\mathrm{CE(BC)} = 4.08 \times 10^{-4}$, but its overall accuracy remains limited ($5.99 \times 10^{-1}$). This behavior is consistent with a failure mode in inverse optimization where the method collapses to a trivial solution, yielding low constraint residuals while failing to match observational data. In contrast, AdamFLIP adaptively balances constraints and data fitting during training, preserving strong IC/BC enforcement ($\mathrm{CE(IC)} = 1.77 \times 10^{-5}$, $\mathrm{CE(BC)} = 2.57 \times 10^{-4}$) while dramatically reducing the data loss and improving overall solution accuracy, with a comparable computational cost.

\begin{table}[h]
\centering
\setlength{\tabcolsep}{4pt} 
\caption{Quantitative error comparison for the inverse fractional PDE problem. We report relative $L^2$ errors, constraint errors (CE) at initial (IC), boundary (BC), physics (Phy), and data loss (DL) collocation points, and training wall-clock time (WT).}
\label{tab:fpde_inverse_results}
\resizebox{\textwidth}{!}{
\begin{tabular}{lccccccc}
\toprule
\textbf{Method} & \textbf{Constraint} & \textbf{Rel. $L_2$} & \textbf{CE (IC)} & \textbf{CE (BC)} & \textbf{CE (Phy)} & \textbf{CE (DL)} & \textbf{WT (s)} \\
\midrule
Standard PINN & \xmark 
& $2.22\times10^{-1}$ 
& $9.01\times10^{-4}$ 
& $2.27\times10^{-2}$ 
& $7.01\times10^{-2}$ 
& $3.59\times10^{-3}$ 
& \textbf{120} \\

AL-PINN & \cmark 
& $8.81\times10^{-1}$ 
& $2.60\times10^{-4}$ 
& $3.74\times10^{-3}$ 
& $4.92\times10^{-3}$ 
& $5.62\times10^{-2}$ 
& 134 \\

trSQP-PINN & \cmark 
& $6.67\times10^{-1}$ 
& $1.19\times10^{-4}$ 
& $3.59\times10^{-3}$ 
& $\mathbf{3.78\times10^{-3}}$ 
& $5.44\times10^{-2}$ 
& 144 \\

FL-PINN & \cmark 
& $5.99\times10^{-1}$ 
& $\mathbf{1.56\times10^{-5}}$ 
& $4.08\times10^{-4}$ 
& $1.82\times10^{-1}$ 
& $2.61\times10^{-2}$ 
& 141 \\

\textbf{AdamFLIP} & \cmark 
& $\mathbf{3.95\times10^{-2}}$ 
& $1.77\times10^{-5}$ 
& $\mathbf{2.57\times10^{-4}}$ 
& $6.53\times10^{-2}$ 
& $\mathbf{1.13\times10^{-4}}$ 
& 133 \\
\bottomrule
\end{tabular}%
}
\end{table}
Figure~\ref{fig:inverse-fpde} illustrates the spatiotemporal solution fields and corresponding absolute error maps for the inverse one-dimensional time-fractional mixed diffusion--wave equation. Compared with the forward case, the inverse problem is more challenging due to the additional requirement of parameter identification while simultaneously fitting observational data. The Standard PINN solution exhibits noticeable distortion in the upper time region, where the dominant response is expected. The predicted field appears overly smoothed and shifted, leading to large localized errors near the top boundary, as highlighted in the absolute error map. This behavior reflects the tendency of unconstrained optimization to converge to suboptimal parameter values that partially satisfy the PDE residual but fail to accurately reconstruct the underlying dynamics.

FL-PINN improves constraint enforcement and produces a solution with a clearer dominant structure compared to Standard PINN. However, the reconstructed field still deviates from the expected spatiotemporal pattern, particularly in the central region, where the amplitude and curvature are misrepresented. The corresponding error map reveals persistent residual bands, indicating that strict constraint minimization alone does not guarantee accurate parameter recovery in the inverse setting. AdamFLIP yields a solution field that most closely matches the expected physical structure, accurately capturing both the spatial localization and temporal evolution of the dominant response. The absolute error map is nearly uniform and close to zero across the entire domain, with only minor residuals near regions of steep gradient. This demonstrates that adaptive constraint balancing effectively mitigates the degeneracy observed in pure feedback linearization and prevents convergence to trivial or physically inconsistent parameter values. Overall, Fig.~\ref{fig:inverse-fpde} visually confirms the quantitative results reported in Table~\ref{tab:fpde_inverse_results}: AdamFLIP provides substantially improved accuracy, more reliable parameter identification, and significantly reduced global error compared to Standard PINN and FL-PINN in the inverse fractional PDE setting.

\begin{figure}[H]
    \centering
    \begin{subfigure}[b]{0.48\linewidth}
        \centering
        \includegraphics[width=\linewidth]{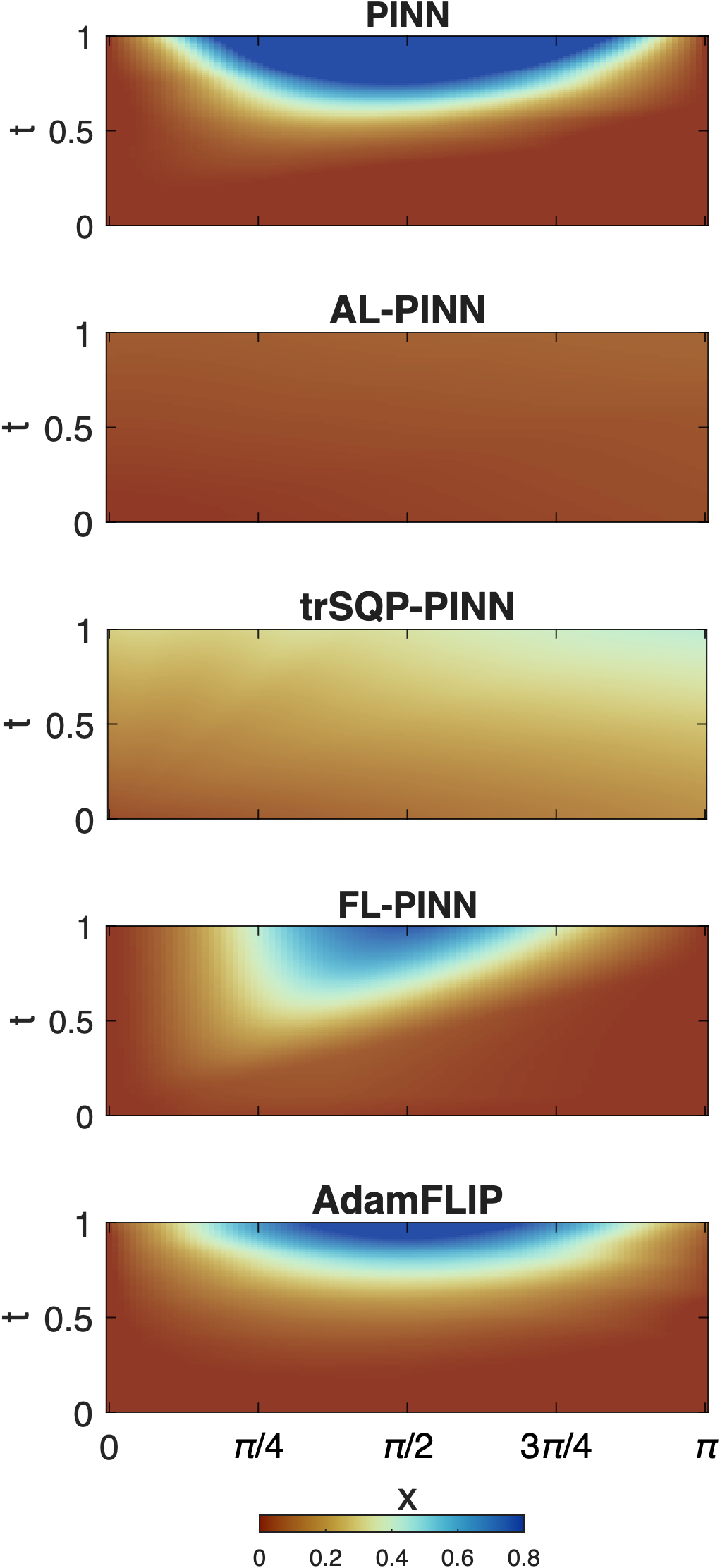}
        \caption{Spatiotemporal fields of different models}
        \label{fig:fpde-inverse}
    \end{subfigure}\hfill
    \begin{subfigure}[b]{0.48\linewidth}
        \centering
        \includegraphics[width=\linewidth]{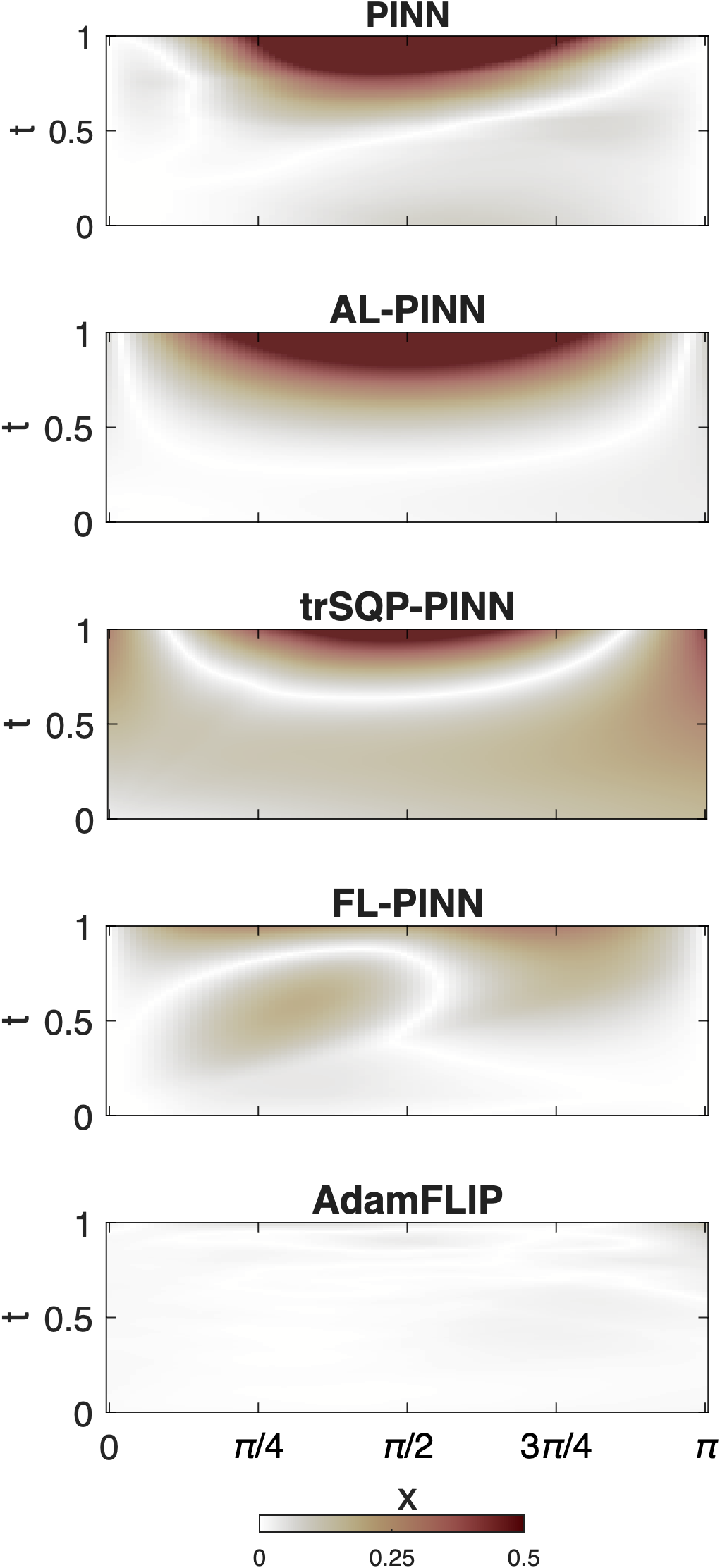}
        \caption{Absolute errors}
        \label{fig:fpde-inverse-error}
    \end{subfigure}
    \caption{Inverse time-fractional mixed diffusion--wave equation (1D) results: spatiotemporal reconstructions and absolute errors. (a) Predicted solution fields $u(x,t)$ obtained by Standard PINN, FL-PINN, trSQP-PINN, and AdamFLIP. (b) Corresponding absolute error maps, showing that AdamFLIP produces the smallest and most uniform errors across the spatiotemporal domain, whereas Standard PINN and FL-PINN exhibit larger localized error bands, especially near the upper-time region.}
    \label{fig:inverse-fpde}
\end{figure}

\subsection{2D Heat Equation}
\label{sec:2d-heat}
We next consider the classical two-dimensional heat equation on the unit square $\Omega = (0,1)^2$ with homogeneous Dirichlet boundary conditions:
\begin{align}
    \label{eq:2d-heat}
&\frac{\partial u(x,y,t) }{\partial t}  - \nu \, \Delta u(x,y,t) \;=\; f(x,y,t), \qquad (x,y)\in\Omega,\; t\in[0,1],\\
&u(x,y,t) = 0,\quad \forall x \in \partial\Omega,\label{eq:2d-heat-2}\\
&u(x,y,0)=u_0(x,y),
\end{align}
where $\Delta$ is the Laplacian operator and $\partial\Omega$ denotes the boundary of the domain. To enable quantitative comparison, we consider the following analytical solution
\begin{equation}
u(x,y,t) = \sin(\pi x)\sin(\pi y)\, \mathrm{e}^{-2\pi^2\kappa t},
\end{equation}
with no forcing, i.e., $f \equiv 0$.

\subsubsection{PINN's settings}
\paragraph{Neural network architecture} 
To ensure a fair and consistent comparison across all different models, we employ an identical deep neural network architecture. Specifically, each model utilizes a fully connected network consisting of $6$ hidden layers with $64$ neurons per layer. During the training phase, each model was trained until its loss dropped below the predefined threshold, $\epsilon$.

\paragraph{Training and testing data}
In the training stage, the initial and boundary condition losses are evaluated using data randomly sampled in space at the initial time $t = 0$ and in time along the spatial boundaries $((x,y)\in\partial\Omega)$. Specifically, we sample $N_{\text{ic}} = 500$ points for the initial condition and $N_{\text{bc}} = 500$ points for the boundary conditions. Additionally, we sample $N_f = 2000$ collocation points within the interior of the spatio-temporal domain to compute the residual loss. During the testing stage, the evaluation data are sampled over the spatiotemporal domain $\Omega \times [0,1]$. The data is generated on a uniform grid with a spatial mesh size of $\Delta x = \Delta y = 0.01$ and a temporal step size of $\Delta t = 0.01$. All samples are drawn uniformly at random from their respective domains. The homogeneous Dirichlet condition $u=0$ is imposed at all boundary points.

\subsubsection{Forward Problem}
In this section, we consider the forward problem setting where the physics loss is fully specified and does not involve any unknown parameters to be inferred which is same as in Section \ref{sec:burgers-forward}.
Table~\ref{tab:heat_results} presents a quantitative comparison of standard PINN (Adam), AL-PINN, trSQP-PINN, FL-PINN, and AdamFLIP for the two-dimensional heat equation. The performance is measured in terms of the relative $L^2$ error and the constraint errors (CE) for the initial condition (IC), boundary condition (BC), and physics loss (PL). Among all models considered, \textit{AdamFLIP} performs best, achieving a relative $L^2$ error that is approximately one order of magnitude smaller than that of the standard PINN. Furthermore, AdamFLIP yields the lowest constraint errors for both the BC and PL components, with its PL constraint error being roughly one-third of that achieved by the standard PINN.

In contrast to the Burgers' equation results, \textit{trSQP-PINN} struggles significantly in this setting, exhibiting a relative $L^2$ error that is more than one order of magnitude higher than the other models. This indicates that the optimization advantages of trSQP-PINN do not necessarily translate to improved accuracy for the heat equation. It is also worth noting that all models yield comparable wall-clock times that implies the improved performance of \textit{AdamFLIP} is achieved without introducing prohibitive computational cost, as its training time remains within approximately one-third of the baseline PINN's overhead.

Figure~\ref{fig:heat-forward} shows the absolute error distributions at the two different time steps. The standard PINN exhibits a noticeable error distribution that intensifies near the boundaries and the center of the domain as the system evolves toward $t = 1.0$. While maintaining the overall physical structure, the FL-PINN shows a concentrated region of higher absolute error near the domain center $(x \approx 0.5, y \approx 0.5)$. Notably, the peak error for this method is approximately one order of magnitude larger than that of the best-performing model. In contrast, the \textit{AdamFLIP} demonstrates the highest precision among all methods. The absolute error maps for AdamFLIP are significantly clearer, with error magnitudes consistently remaining near the lower bound of the scale across both time steps, which is consistent with the results in Table~\ref{tab:heat_results}.

\begin{table}[h]
\centering
\setlength{\tabcolsep}{4pt} 
\caption{Quantitative error comparison for the 2D heat equation (forward problem). We report relative $L^2$ errors, constraint errors (CE) at initial (IC), boundary (BC), and physics (Phy) collocation points, and training wall-clock time (WT).}
\label{tab:heat_results}
\resizebox{\textwidth}{!}{
\begin{tabular}{lcccccc}
\toprule
\textbf{Method} & \textbf{Constraint} & \textbf{Rel. $L_2$} & \textbf{CE (IC)} & \textbf{CE (BC)} & \textbf{CE (Phy)} & \textbf{WT (s)} \\
\midrule
Standard PINN & \xmark 
& $3.43\times10^{-2}$ 
& $1.20\times10^{-4}$ 
& $1.56\times10^{-4}$ 
& $1.82\times10^{-3}$ 
& \textbf{261} \\

AL-PINN & \cmark 
& $3.09\times10^{-2}$ 
& $\mathbf{3.65\times10^{-5}}$ 
& $7.89\times10^{-5}$ 
& $7.41\times10^{-4}$ 
& 364 \\

trSQP-PINN & \cmark 
& $6.60\times10^{-1}$ 
& $1.22\times10^{-1}$ 
& $2.62\times10^{-2}$ 
& $1.07\times10^{-2}$ 
& 1260 \\

FL-PINN & \cmark 
& $9.49\times10^{-3}$ 
& $1.13\times10^{-4}$ 
& $1.70\times10^{-4}$ 
& $7.35\times10^{-4}$ 
& 354 \\

\textbf{AdamFLIP} & \cmark 
& $\mathbf{7.53\times10^{-3}}$ 
& $5.59\times10^{-5}$ 
& $\mathbf{6.35\times10^{-5}}$ 
& $\mathbf{2.00\times10^{-4}}$ 
& 336 \\
\bottomrule
\end{tabular}%
}
\end{table}

\begin{figure}[H]
    \centering
        \includegraphics[width=\linewidth]{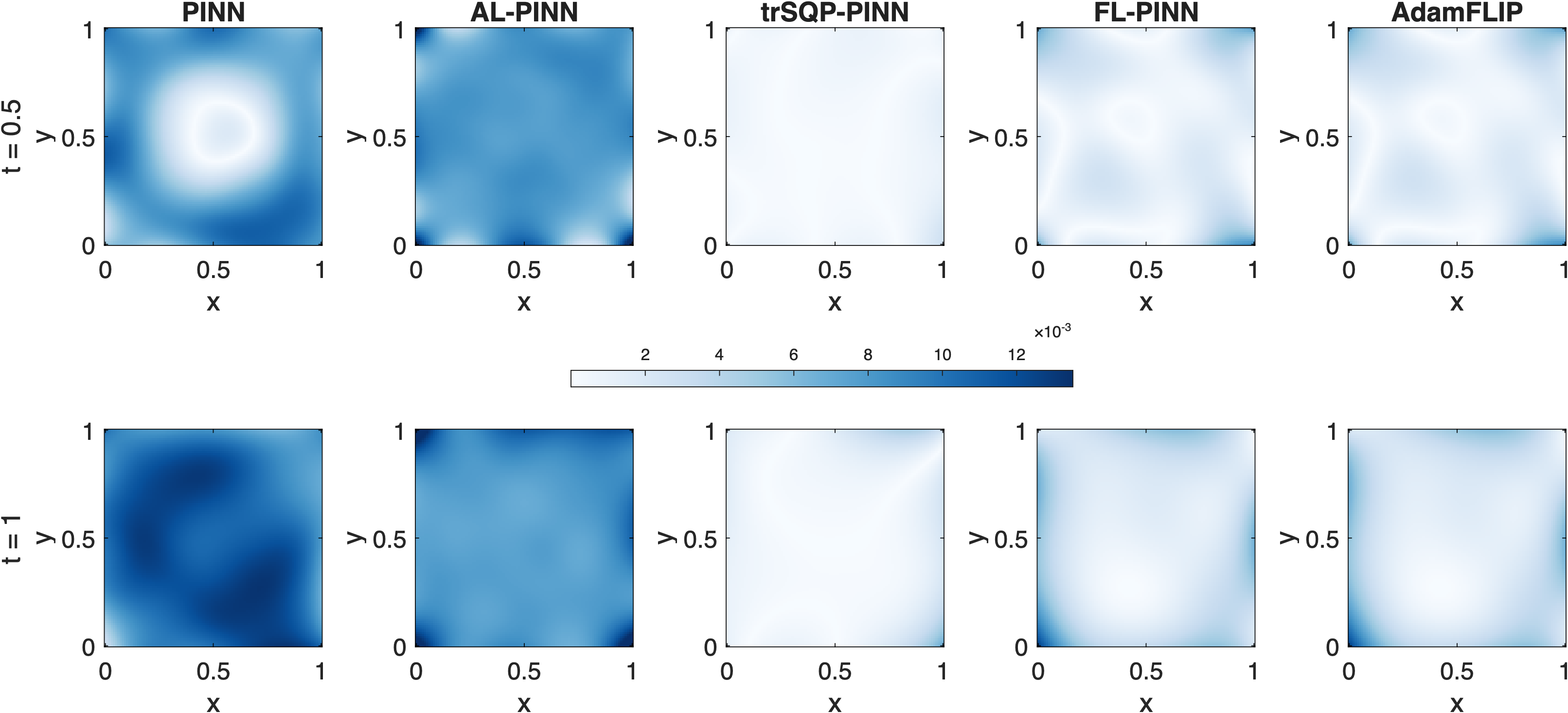}
        \caption{Comparison of spatiotemporal solutions and absolute errors in forward problem for the 2d Heat equation: Standard PINN, AL-PINN, trSQP-PINN, FL-PINN, and AdamFLIP. }
    \label{fig:heat-forward}
\end{figure}

\subsubsection{Inverse Problem}
In the inverse problem setting, we consider the thermal diffusivity $\nu$ in Eq.~\eqref{eq:2d-heat} to be an unknown parameter, hereafter denoted as $\kappa$, which leads to the following formulation:
\begin{align}
    \label{eq:2d-heat-inverse-1}
&\frac{\partial u(x,y,t) }{\partial t}  - \kappa \, \Delta u(x,y,t) \;=\; f(x,y,t), \qquad (x,y)\in\Omega,\; t\in[0,1],\\
&u(x,y,t) = 0,\quad \forall x \in \partial\Omega,\label{eq:2d-heat-inverse-3}\\
&u(x,y,0)=u_0(x,y),
\end{align}
Consistent with the treatement in Section \ref{sec:burgers-inverse}, $\kappa$ is parameterized as learnable scalars, which are co-optimized with the neural network weights by minimizing the loss function same as in the forward problem. In this setup, for all models we initialize the parameter estimates at $\kappa =1$, whereas their ground truth values are $\kappa = 0.1 $.

Table~\ref{tab:inverse_params} shows the learned PDE parameters and their $L^1$ error compared to the ground truth for the Burgers' equation in the inverse problem setting. It is shown that among all models, AdamFLIP demonstrates the highest accuracy by achieving the lowest $L^1$ errors in estimating $\kappa$. Specifically, it reduces the estimation error of $\kappa$ by nearly half of magnitude compared to the next best model, standard PINN.

\begin{table}[h]
\centering
\caption{Parameter estimation results for the two-dimensional heat equation in the inverse problem setting. The exact PDE parameters are $\kappa=0.1$.}
\label{tab:inverse_heat_alpha}
\begin{tabular}{lcc}
\toprule
\textbf{Method} & \textbf{$\hat{\kappa}$} & $|\hat{\kappa}-\kappa|$ \\
\midrule
Standard PINN  & 0.0991 & 0.0009 \\
AL-PINN & 0.1020 & 0.0020 \\
trSQP-PINN & 0.9840 & 0.8840 \\
FL-PINN & 0.0933 & 0.0067 \\
\textbf{AdamFLIP} & \textbf{0.0995} & \textbf{0.0005} \\
\bottomrule
\end{tabular}
\end{table}

\begin{table}[h]
\centering
\setlength{\tabcolsep}{4pt} 
\caption{Quantitative error comparison for the 2D heat equation (inverse problem). We report relative $L^2$ errors, constraint errors (CE) at initial (IC), boundary (BC), physics (Phy), and data loss (DL) collocation points, and training wall-clock time (WT).}
\label{tab:inverse_heat_results}
\resizebox{\textwidth}{!}{
\begin{tabular}{lccccccc}
\toprule
\textbf{Method} & \textbf{Constraint} & \textbf{Rel. $L_2$} & \textbf{CE (IC)} & \textbf{CE (BC)} & \textbf{CE (Phy)} & \textbf{CE (DL)} & \textbf{WT (s)} \\
\midrule
Standard PINN & \xmark 
& $2.90\times10^{-2}$ 
& $3.06\times10^{-5}$ 
& $1.19\times10^{-4}$ 
& $2.79\times10^{-4}$ 
& $5.22\times10^{-5}$ 
& 297 \\

AL-PINN & \cmark 
& $3.98\times10^{-2}$ 
& $1.71\times10^{-4}$ 
& $3.07\times10^{-4}$ 
& $2.29\times10^{-4}$ 
& $9.18\times10^{-5}$ 
& 361 \\

trSQP-PINN & \cmark 
& $6.42\times10^{-1}$ 
& $4.81\times10^{-4}$ 
& $4.57\times10^{-3}$ 
& $3.73\times10^{-3}$ 
& $2.59\times10^{-2}$ 
& 1300 \\

FL-PINN & \cmark 
& $4.06\times10^{-2}$ 
& $1.80\times10^{-5}$ 
& $\mathbf{8.73\times10^{-6}}$ 
& $7.44\times10^{-5}$ 
& $8.81\times10^{-5}$ 
& 360 \\

\textbf{AdamFLIP} & \cmark 
& $\mathbf{1.59\times10^{-2}}$ 
& $\mathbf{1.43\times10^{-5}}$ 
& $1.03\times10^{-5}$ 
& $9.13\times10^{-5}$ 
& $\mathbf{1.51\times10^{-5}}$ 
& 346 \\
\bottomrule
\end{tabular}%
}
\end{table}

Table ~\ref{tab:inverse_heat_results} presents a quantitative error comparison of different models for the two-dimensional heat equation in the inverse problem setting. Beyond the standard constraint errors (CE) for the initial condition (IC), boundary condition (BC), and physics loss (PL), we also report the data loss error (CE(DL)) used to infer the unknown diffusivity $\kappa$. Consistent with the forward problem results, \textit{AdamFLIP} exhibits the most robust performance, achieving a relative $L^2$ error that is approximately half that of the standard PINN. Furthermore, the data loss error (CE(DL)) for AdamFLIP is roughly one-third of the error observed in the standard baseline, indicating a significantly more accurate reconstruction of the underlying field from observed data.

It is also noteworthy that \textit{AdamFLIP} achieves the lowest initial condition error (CE(IC)), which is approximately half the magnitude of the error produced by the standard PINN. In contrast, \textit{trSQP-PINN} continues to struggle in this setting, yielding a relative $L^2$ error and a data loss error that are more than one order of magnitude higher than those of the other models. Regarding computational efficiency, all frameworks yield comparable wall-clock times (WT) though the training time for AdamFLIP is approximately one-fifth higher than the standard PINN.

These results are also confirmed in Figure~\ref{fig:heat-inverse}. While all methods capture the general diffusion dynamics (panel a.), the spatial distributions of absolute error at different time instances (panel. b) reveal that \textit{AdamFLIP} yields a much smaller error distribution in space that effectively mitigating the concentrated interior errors observed in the standard PINN and FL-PINN models.

\begin{figure}[H]
    \centering
        \includegraphics[width=\linewidth]{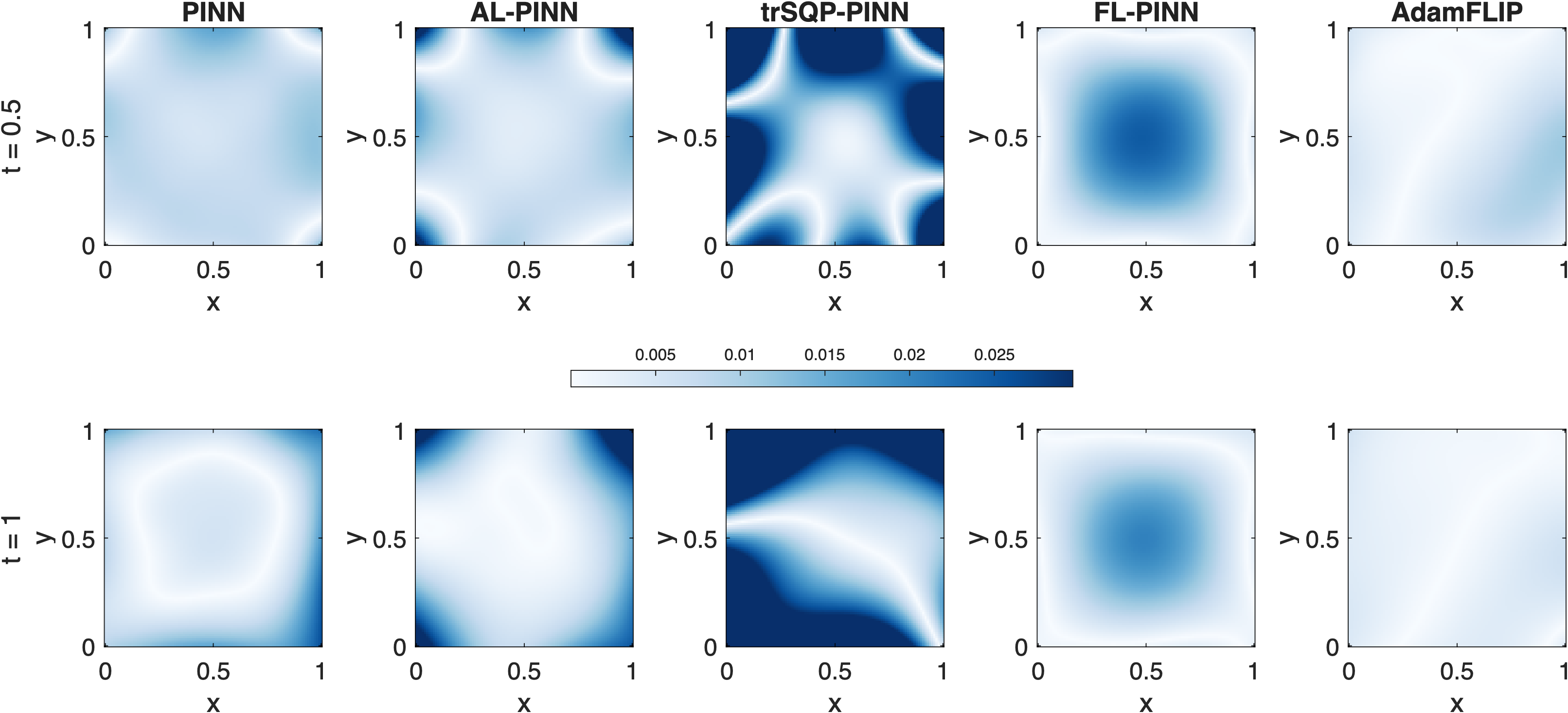}
        \caption{Inverse 2D heat equation results: comparison of absolute errors fields for each method at the same time instants,for PINN, FL-PINN, trSQP-PINN, and AdamFLIP. 
         It shows that AdamFLIP yields the smallest and most uniformly distributed errors across the spatial domain.
        }
    \label{fig:heat-inverse}
\end{figure}

\subsection{2D Incompressible Navier--Stokes}
We evaluate all methods on the 2D incompressible Navier--Stokes equations, a coupled nonlinear system that is substantially more challenging than the scalar PDEs considered above. The governing equations on $\Omega = [0,2\pi]^2$, $t \in [0,T]$ are
\begin{align}
&\frac{\partial u}{\partial t}
+ u \frac{\partial u}{\partial x}
+ v \frac{\partial u}{\partial y}
= - \frac{\partial p}{\partial x}
+ \nu \left(
\frac{\partial^2 u}{\partial x^2}
+ \frac{\partial^2 u}{\partial y^2}
\right), \label{eq:nse-u}\\
&\frac{\partial v}{\partial t}
+ u \frac{\partial v}{\partial x}
+ v \frac{\partial v}{\partial y}
= - \frac{\partial p}{\partial y}
+ \nu \left(
\frac{\partial^2 v}{\partial x^2}
+ \frac{\partial^2 v}{\partial y^2}
\right), \label{eq:nse-v}\\
&\frac{\partial u}{\partial x}
+ \frac{\partial v}{\partial y}
= 0, \label{eq:nse-div}
\end{align}
where $\mathbf{u} = (u,v)$ is the velocity field, $p$ is the pressure, and $\nu = 0.01$. We use the Taylor--Green vortex as the benchmark, which admits the following closed-form analytical solution:
\begin{align}
u(x,y,t) &= -\cos(x)\sin(y)\,e^{-2\nu t}, \label{eq:tgv-u}\\
v(x,y,t) &= \sin(x)\cos(y)\,e^{-2\nu t}, \label{eq:tgv-v}\\
p(x,y,t) &= -\tfrac{1}{4}\left(\cos(2x)+\cos(2y)\right)e^{-4\nu t}. \label{eq:tgv-p}
\end{align}

\subsubsection{Numerical Settings}
\paragraph{Neural network architecture}
To ensure a fair and consistent comparison across all models, we employ an identical fully connected network consisting of $6$ hidden layers with $64$ neurons per layer. The network takes $(x,y,t)$ as input and predicts the velocity-pressure field $(u,v,p)$. During training, each model was trained until its loss dropped below the predefined threshold, $\epsilon$.

\paragraph{Training and testing data}
In the training stage, the initial condition loss is evaluated at $N_{\text{ic}} = 500$ randomly sampled spatial points at $t=0$. The boundary condition loss is evaluated at $N_{\text{bc}} = 500$ points sampled along the spatial boundaries over time. The physics loss, comprising the two momentum equations~\eqref{eq:nse-u}--\eqref{eq:nse-v} and the incompressibility constraint~\eqref{eq:nse-div}, is computed on $N_f = 2000$ collocation points randomly sampled from the interior of the spatiotemporal domain. During the testing stage, the evaluation data are generated on a uniform grid over $\Omega \times [0,T]$ with spatial mesh size $\Delta x = \Delta y = 0.05$ and temporal step size $\Delta t = 0.1$.

We consider the forward problem setting where the viscosity $\nu$ is known and the objective is to accurately approximate the velocity and pressure fields. Table~\ref{tab:nse_forward_results} presents a quantitative comparison of Standard PINN (Adam), AL-PINN, trSQP-PINN, FL-PINN, and AdamFLIP for the Navier--Stokes equations. The performance is measured in terms of the mean relative $L^2$ errors of the velocity components $u$ and $v$, the constraint errors (CE) for the initial condition (IC), boundary condition (BC), and physics residual (Phy), and the training wall-clock time (WT).

Among all models, AdamFLIP achieves the best performance across all accuracy metrics. It attains mean relative $L_2$ errors of $1.18\times10^{-2}$ and $1.06\times10^{-2}$ for the $u$ and $v$ velocity components, respectively---roughly $3\times$ lower than Standard PINN ($3.62\times10^{-2}$ and $3.48\times10^{-2}$) and substantially better than all constrained baselines. AdamFLIP also yields the lowest constraint errors across IC, BC, and physics residual, demonstrating that the feedback linearization mechanism effectively balances multi-component constraint satisfaction in this coupled system.

Notably, trSQP-PINN fails catastrophically on this problem, producing $L_2$ errors on the order of $10^{0}$ and a physics CE of $1.29\times10^{1}$. This is consistent with the sensitivity of trust-region SQP methods to the increased dimensionality and nonlinearity of the Navier--Stokes system. FL-PINN also struggles relative to the scalar PDE benchmarks, with $L_2$ errors ($7.70\times10^{-2}$, $8.13\times10^{-2}$) that are worse than Standard PINN, suggesting that naive feedback linearization without adaptive moment estimation is insufficient for this more complex problem. AL-PINN performs comparably to Standard PINN on the $v$-component but shows degraded IC/BC enforcement.

In terms of computational cost, Standard PINN is the fastest at $155$\,s, while AdamFLIP requires $335$\,s---roughly $2\times$ the baseline, which is a larger overhead than observed on the scalar PDEs. This additional cost is attributable to the Jacobian computations required for the multi-output, multi-constraint structure of the Navier--Stokes system. Nevertheless, the accuracy improvement is substantial.

\begin{table}[h]
\centering
\setlength{\tabcolsep}{4pt}
\caption{Quantitative error comparison for the 2D Navier--Stokes equations (forward problem). We report mean relative $L^2$ errors for velocity components $u$ and $v$, constraint errors (CE) at initial (IC), boundary (BC), and physics (Phy) collocation points.}
\label{tab:nse_forward_results}
\resizebox{\textwidth}{!}{%
\begin{tabular}{lcccccc}
\toprule
\textbf{Method} & \textbf{Constraint} & \textbf{Mean $L_2(u)$} & \textbf{Mean $L_2(v)$} & \textbf{CE (IC)} & \textbf{CE (BC)} & \textbf{CE (Phy)} \\
\midrule

Standard PINN & \xmark 
& $3.62\times10^{-2}$ 
& $3.48\times10^{-2}$ 
& $2.15\times10^{-5}$ 
& $1.02\times10^{-4}$ 
& $6.11\times10^{-4}$ \\

AL-PINN & \cmark
& $5.19\times10^{-2}$
& $3.69\times10^{-2}$
& $1.10\times10^{-3}$
& $8.13 \times10^{-4}$
& $8.56\times10^{-4}$ \\

trSQP-PINN & \cmark
& $1.96\times10^{0}$
& $9.68\times10^{-1}$
& $7.82\times10^{-4}$
& $3.56\times10^{-4}$
& $1.29\times10^{1}$ \\

FL-PINN & \cmark 
& $7.70\times10^{-2}$ 
& $8.13\times10^{-2}$ 
& $1.98\times10^{-3}$ 
& $2.72\times10^{-3}$ 
& $1.08\times10^{-2}$ \\

\textbf{AdamFLIP} & \cmark 
& $\mathbf{1.18\times10^{-2}}$ 
& $\mathbf{1.06\times10^{-2}}$ 
& $\mathbf{1.26\times10^{-5}}$ 
& $\mathbf{4.86\times10^{-5}}$ 
& $\mathbf{2.53\times10^{-4}}$ \\

\bottomrule
\end{tabular}%
}
\end{table}

Figure~\ref{fig:contour-nse} presents the pointwise absolute error maps $|u - \hat{u}|$ and $|v - \hat{v}|$ at $t=1$ for all five methods. Standard PINN exhibits moderate, spatially distributed errors that are roughly uniform across the domain. AL-PINN shows a similar error pattern with slightly larger magnitudes. trSQP-PINN produces large-scale errors spanning the entire domain, consistent with its failure to converge to a physically meaningful solution. FL-PINN displays concentrated error regions, particularly near the vortex cores where the velocity gradients are steepest. In contrast, AdamFLIP achieves the smallest and most spatially uniform errors for both velocity components, with residual errors primarily confined to narrow bands near regions of maximum shear. These qualitative observations are fully consistent with the quantitative results reported in Table~\ref{tab:nse_forward_results}.

\begin{figure}[H]
    \centering
    \includegraphics[width=\linewidth]{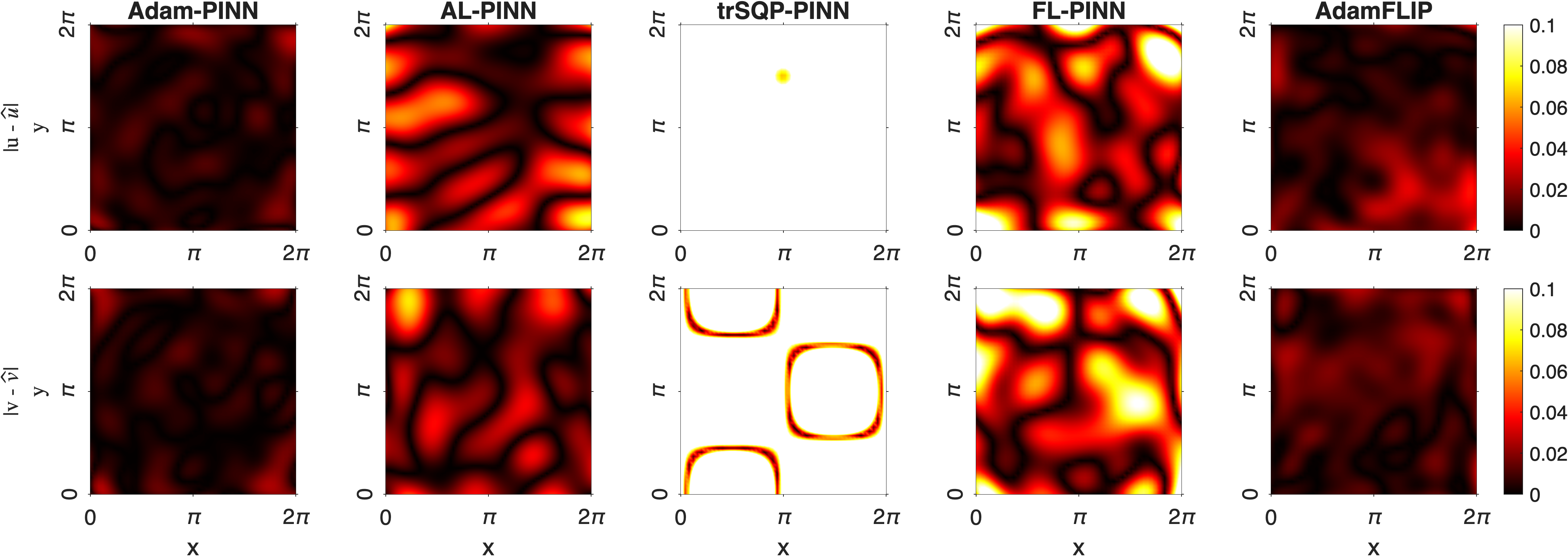}
    \caption{Pointwise absolute error $|u - \hat{u}|$ (top row) and $|v - \hat{v}|$ (bottom row) at $t=1$ for the 2D Navier--Stokes equations. From left to right: Standard PINN, AL-PINN, trSQP-PINN, FL-PINN, and AdamFLIP. AdamFLIP produces the smallest and most spatially uniform errors across both velocity components.}
    \label{fig:contour-nse}
\end{figure}
\section{Further Ablations}
Table~\ref{tab:beta_ablation} studies the effect of the momentum parameters
$(\beta_1,\beta_2)$ under fixed $K_p=1000$ and $K_i=0.01$.
Overall, introducing stronger first- and second-moment smoothing significantly
reduces the relative $L_2$ error compared with the weak-momentum baseline.
The setting $(\beta_1,\beta_2)=(0.99,0.9999)$ achieves the lowest mean relative
$L_2$ error, indicating that aggressive momentum smoothing can improve the final
solution accuracy. However, this configuration may be more sensitive to random
initialization and training dynamics. In comparison, $(\beta_1,\beta_2)=(0.95,0.999)$
also yields a very low relative $L_2$ error and provides a competitive balance
between accuracy and stability. Therefore, we use $(\beta_1,\beta_2)=(0.95,0.999)$
as the default configuration in our experiments.

\begin{table}[H]
\centering
\scriptsize
\setlength{\tabcolsep}{6pt}
\renewcommand{\arraystretch}{1.08}
\caption{
Ablation study on the momentum parameters $(\beta_1,\beta_2)$ with fixed
$K_p=1000$ and $K_i=0.01$.
We report the relative $L_2$ error averaged over three random seeds.
Lower values are better. The best result is highlighted in bold.
}
\label{tab:beta_ablation}
\resizebox{0.65\linewidth}{!}{
\begin{tabular}{ccc}
\toprule
$\boldsymbol{\beta_1}$ & $\boldsymbol{\beta_2}$
& \textbf{Rel. $L_2$ Error} \\
\midrule
$0.00$ & $0.9000$ 
& $2.15{\times}10^{-1} \pm 3.34{\times}10^{-2}$ \\

$0.50$ & $0.9900$ 
& $1.00{\times}10^{-1} \pm 1.28{\times}10^{-2}$ \\

$0.90$ & $0.9900$ 
& $1.33{\times}10^{-1} \pm 1.56{\times}10^{-2}$ \\

$0.90$ & $0.9990$ 
& $1.10{\times}10^{-1} \pm 1.89{\times}10^{-2}$ \\

$0.95$ & $0.9990$ 
& $8.48{\times}10^{-3} \pm 5.01{\times}10^{-2}$ \\

$0.99$ & $0.9999$ 
& $\mathbf{7.53{\times}10^{-3} \pm 6.99{\times}10^{-3}}$ \\
\bottomrule
\end{tabular}
}
\end{table}

Table~\ref{tab:kp_constraint_ablation} studies the effect of the proportional
penalty gain $K_p$ on the objective loss and constraint satisfaction. Here, the
physics loss is treated as the optimization objective, while the initial
condition and boundary condition losses are treated as constraints. As $K_p$
increases, both constraint losses decrease substantially, indicating that a
larger proportional gain improves enforcement of the initial and boundary
conditions. However, this comes at the cost of a larger physics loss, revealing
a clear trade-off between minimizing the PDE residual objective and satisfying
the constraints. Small values such as $K_p=10$ achieve the lowest physics loss
but fail to enforce the constraints, especially the initial condition loss. In
contrast, very large values such as $K_p=5000$ provide strong boundary
constraint satisfaction but significantly increase the physics loss. The setting
$K_p=1000$ provides a balanced trade-off, achieving much lower IC and BC losses
than small-$K_p$ settings while avoiding the larger objective degradation
observed at $K_p=5000$. Therefore, we select $K_p=1000$ as the default
configuration.

\begin{table}[t]
\centering
\scriptsize
\setlength{\tabcolsep}{5.5pt}
\renewcommand{\arraystretch}{1.08}
\caption{
Ablation study on the proportional penalty gain $K_p$.
We report the physics loss as the objective and the initial condition (IC) and
boundary condition (BC) losses as constraints. Results are averaged over three
random seeds and reported as mean $\pm$ standard deviation.
Lower values are better. The selected default configuration is highlighted in bold.
}
\label{tab:kp_constraint_ablation}
\resizebox{\linewidth}{!}{
\begin{tabular}{cccc}
\toprule
\multirow{2}{*}{$\boldsymbol{K_p}$}
& \textbf{Objective}
& \multicolumn{2}{c}{\textbf{Constraints}} \\
\cmidrule(lr){2-2}
\cmidrule(lr){3-4}
& \textbf{Physics Loss}
& \textbf{IC Loss}
& \textbf{BC Loss} \\
\midrule
$10$
& $3.83{\times}10^{-4} \pm 5.66{\times}10^{-5}$
& $9.83{\times}10^{-1} \pm 1.17{\times}10^{-1}$
& $8.48{\times}10^{-2} \pm 4.87{\times}10^{-2}$ \\

$50$
& $4.71{\times}10^{-3} \pm 5.96{\times}10^{-4}$
& $3.88{\times}10^{-2} \pm 5.31{\times}10^{-3}$
& $3.55{\times}10^{-2} \pm 7.72{\times}10^{-3}$ \\

$100$
& $7.34{\times}10^{-3} \pm 4.68{\times}10^{-4}$
& $2.54{\times}10^{-2} \pm 1.33{\times}10^{-2}$
& $2.95{\times}10^{-2} \pm 1.48{\times}10^{-2}$ \\

$200$
& $1.01{\times}10^{-2} \pm 1.07{\times}10^{-3}$
& $3.55{\times}10^{-2} \pm 3.08{\times}10^{-2}$
& $2.68{\times}10^{-2} \pm 1.49{\times}10^{-2}$ \\

$500$
& $1.12{\times}10^{-2} \pm 6.52{\times}10^{-4}$
& $2.27{\times}10^{-2} \pm 1.76{\times}10^{-2}$
& $1.89{\times}10^{-2} \pm 1.29{\times}10^{-2}$ \\

$\mathbf{1000}$
& $\mathbf{1.30{\times}10^{-2} \pm 1.02{\times}10^{-3}}$
& $\mathbf{1.26{\times}10^{-2} \pm 4.99{\times}10^{-3}}$
& $\mathbf{1.31{\times}10^{-2} \pm 4.80{\times}10^{-3}}$ \\

$2000$
& $1.43{\times}10^{-2} \pm 1.56{\times}10^{-3}$
& $8.16{\times}10^{-3} \pm 6.20{\times}10^{-4}$
& $9.25{\times}10^{-3} \pm 1.50{\times}10^{-3}$ \\

$5000$
& $2.25{\times}10^{-2} \pm 3.38{\times}10^{-3}$
& $1.10{\times}10^{-2} \pm 4.69{\times}10^{-3}$
& $9.28{\times}10^{-3} \pm 4.49{\times}10^{-3}$ \\
\bottomrule
\end{tabular}
}
\end{table}

\section{Proof of Theorem~\ref{thm:main}}
\label{sec:proof-main}

We prove Theorem~\ref{thm:main} using the exact penalty merit function
\begin{align*}
    \Psi_\rho(\theta):=f(\theta)+\rho\|h(\theta)\|_1.
\end{align*}
Throughout the proof, all constants are uniform over $\theta\in\mathcal C$ and over the admissible diagonal metrics generated by Algorithm~\ref{alg:metric-adamflip}.

\begin{lemma}[Uniform metric bounds and finite metric variation]
\label{lem:metric-bounds-variation}
Under Assumption~\ref{assump:smooth-compact}, the diagonal metrics $D_t$ satisfy
\begin{align*}
    \frac{1}{G+\delta}I\preceq D_t\preceq \frac{1}{\delta}I\qquad \forall t.
\end{align*}
Moreover, if $\Delta_t:=\|D_{t+1}-D_t\|$, then
\begin{align*}
    \sum_{t=1}^{\infty}\Delta_t\le \frac{n}{\delta},
\end{align*}
and therefore
\begin{align*}
    \sum_{t=1}^{\infty}\eta_t\Delta_t\le \frac{\eta_1 n}{\delta}.
\end{align*}
\end{lemma}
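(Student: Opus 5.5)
The plan is to work coordinatewise. Every $D_t$ is diagonal, with entries $d_{t,i}=1/(\sqrt{\bar v_{t-1,i}}+\delta)$. Both claims then reduce to two facts about the scalar sequences $\bar v_{t,i}$: they are bounded in $[0,G^2]$, and they are nondecreasing in $t$.

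\emph{Step 1 (boundedness of the second moments).} I will unroll the recursion $v_t=\beta_2 v_{t-1}+(1-\beta_2)g_t^{\circ2}$ with $v_0=0$, which gives
\[
\tilde v_t=\sum_{s=1}^t \frac{(1-\beta_2)\beta_2^{t-s}}{1-\beta_2^t}\,g_s^{\circ2}.
\]
The weights are nonnegative and sum to one, so $\tilde v_t$ is a convex combination of the vectors $g_s^{\circ2}$. Each entry satisfies $g_{s,i}^2\le\|g_s\|^2\le G^2$ by Assumption~\ref{assump:smooth-compact}, hence $0\le\tilde v_{t,i}\le G^2$. Since $\bar v_t$ is an elementwise maximum of such vectors together with $\bar v_0=0$, induction gives $0\le \bar v_{t,i}\le G^2$ for all $t\ge0$. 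It follows that $\delta\le\sqrt{\bar v_{t-1,i}}+\delta\le G+\delta$, which yields $\frac{1}{G+\delta}I\preceq D_t\preceq\frac1\delta I$. Note that $D_1=\frac1\delta I$ is covered as well.

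\emph{Step 2 (monotonicity and telescoping).} By construction $\bar v_t=\max_{\rm elem}\{\bar v_{t-1},\tilde v_t\}$, so each $\bar v_{t,i}$ is nondecreasing in $t$. Hence each $d_{t,i}$ is nonincreasing, i.e.\ $d_{t,i}-d_{t+1,i}\ge0$. The operator norm of a diagonal matrix is its largest absolute entry, so
\[
\Delta_t=\max_i|d_{t+1,i}-d_{t,i}|\le\sum_{i=1}^n (d_{t,i}-d_{t+1,i}).
\]
Summing over $t$ and telescoping for each fixed $i$ gives $\sum_{t=1}^{T}(d_{t,i}-d_{t+1,i})=d_{1,i}-d_{T+1,i}\le d_{1,i}\le 1/\delta$, using Step 1. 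Letting $T\to\infty$ yields $\sum_t\Delta_t\le n/\delta$.

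\emph{Step 3.} Assumption~\ref{assump:stepsize} gives $\eta_t\le\eta_1$, so $\sum_t\eta_t\Delta_t\le\eta_1\sum_t\Delta_t\le\eta_1 n/\delta$.

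None of these steps is a real obstacle. The only point needing care is Step 1: the bias correction must produce a genuine convex combination, so that $\tilde v_t$ (and not merely $v_t$) is bounded by $G^2$. Once that holds, the AMSGrad-style max makes the metric variation telescope for free.
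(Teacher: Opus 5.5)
Your proposal is correct and matches the paper's proof essentially step for step. You show that the bias-corrected $\tilde v_t$ is a convex combination of the $g_s^{\circ 2}$, so $\bar v_t$ stays in $[0,G^2]$; the AMSGrad maximum makes each diagonal entry of $D_t$ nonincreasing, so its total variation telescopes to at most $1/\delta$; and $\eta_t\le\eta_1$ gives the last claim. The only cosmetic difference is that you bound $\|D_{t+1}-D_t\|$ directly by its largest diagonal entry, whereas the paper goes through the Frobenius norm.
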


\begin{proof}
Since $v_t=\beta_2v_{t-1}+(1-\beta_2)g_t^{\circ2}$ and $v_0=0$, the bias-corrected raw second moment $\tilde v_t=v_t/(1-\beta_2^t)$ is a coordinatewise convex combination of $g_1^{\circ2},\dots,g_t^{\circ2}$. Hence, by Assumption~\ref{assump:smooth-compact}, for every coordinate $j$,
\begin{align*}
    0\le \tilde v_{t,j}\le G^2.
\end{align*}
The monotone second-moment estimate is defined by
\begin{align*}
    \bar v_t=\max_{\rm elem}\{\bar v_{t-1},\tilde v_t\}.
\end{align*}
Thus $0\le \bar v_{t,j}\le G^2$ for all $t,j$. Since
\begin{align*}
    D_t=\diag\left(\frac{1}{\sqrt{\bar v_{t-1}}+\delta}\right),
\end{align*}
we have, for every coordinate $j$,
\begin{align*}
    \frac{1}{G+\delta}\le \frac{1}{\sqrt{\bar v_{t-1,j}}+\delta}\le \frac{1}{\delta}.
\end{align*}
This proves the uniform metric bounds.

It remains to prove finite variation. For each coordinate $j$, define
\begin{align*}
    d_{t,j}:=\frac{1}{\sqrt{\bar v_{t-1,j}}+\delta}.
\end{align*}
Since $\bar v_{t,j}$ is coordinatewise nondecreasing in $t$, the sequence $d_{t,j}$ is nonincreasing in $t$. Moreover, $0<d_{t,j}\le 1/\delta$. Therefore,
\begin{align*}
    \sum_{t=1}^{\infty}|d_{t+1,j}-d_{t,j}|=\sum_{t=1}^{\infty}(d_{t,j}-d_{t+1,j})\le d_{1,j}\le \frac{1}{\delta}.
\end{align*}
Since $D_t$ is diagonal,
\begin{align*}
    \|D_{t+1}-D_t\|\le \|D_{t+1}-D_t\|_F\le \sum_{j=1}^n |d_{t+1,j}-d_{t,j}|.
\end{align*}
Summing over $t$ gives
\begin{align*}
    \sum_{t=1}^{\infty}\Delta_t
    &\le \sum_{j=1}^n\sum_{t=1}^{\infty}|d_{t+1,j}-d_{t,j}| \\
    &\le \frac{n}{\delta}.
\end{align*}
Finally, since $\eta_t\le \eta_1$,
\begin{align*}
    \sum_{t=1}^{\infty}\eta_t\Delta_t\le \eta_1\sum_{t=1}^{\infty}\Delta_t\le \frac{\eta_1 n}{\delta}.
\end{align*}
This proves the lemma.
\end{proof}

\begin{lemma}[Momentum tracking error]
\label{lem:momentum-error-amsgrad}
Let $e_t:=\hat m_t-g_t$. Then there exists $C_e>0$ such that, for every $T\ge1$,
\begin{align*}
    \sum_{t=1}^T\eta_t\|e_t\|\le C_e\left(\sum_{t=1}^T\eta_t^2+\sum_{t=1}^T\eta_t\Delta_t\right).
\end{align*}
Consequently, by Lemma~\ref{lem:metric-bounds-variation},
\begin{align*}
    \sum_{t=1}^T\eta_t\|e_t\|\le C_e\left(\sum_{t=1}^T\eta_t^2+\frac{\eta_1 n}{\delta}\right).
\end{align*}
\end{lemma}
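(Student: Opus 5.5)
The plan is to write $\hat m_t$ as a normalized exponential average of past feedback-linearized gradients. The error $e_t$ is then an average of differences $g_s-g_t$, which I would telescope into consecutive increments $g_{k+1}-g_k$. Each increment is controlled by the step size plus the metric variation $\Delta_k$ from Lemma~\ref{lem:metric-bounds-variation}.

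\textbf{Step 1: averaging representation.} Unrolling $m_t=\beta_1m_{t-1}+(1-\beta_1)g_t$ with $m_0=0$ gives
\begin{align*}
\hat m_t=\sum_{s=1}^t w_{t,s}g_s,\qquad w_{t,s}=\frac{(1-\beta_1)\beta_1^{t-s}}{1-\beta_1^t},\qquad \sum_{s=1}^t w_{t,s}=1.
\end{align*}
Hence
\begin{align*}
e_t=\sum_{s=1}^t w_{t,s}(g_s-g_t),\qquad \|g_s-g_t\|\le\sum_{k=s}^{t-1}\|g_{k+1}-g_k\|.
\end{align*}
Exchanging the order of summation, the coefficient of $\|g_{k+1}-g_k\|$ is $\sum_{s\le k}w_{t,s}=\beta_1^{t-k}(1-\beta_1^k)/(1-\beta_1^t)\le\beta_1^{t-k}$. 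This uses $1-\beta_1^k\le 1-\beta_1^t$ for $k<t$. Therefore
\begin{align*}
\|e_t\|\le\sum_{k=1}^{t-1}\beta_1^{t-k}\|g_{k+1}-g_k\|,
\end{align*}
and in particular $e_1=0$. Because $\hat m_t$ is a convex combination of the $g_s$, we also get $\|\hat m_t\|\le G$.

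\textbf{Step 2: Lipschitz control of $g$ in $(\theta,D)$.} This is the main obstacle. Write
\begin{align*}
g=\Phi(\theta,D):=\nabla f(\theta)-J_h(\theta)^\top M(\theta,D)^{-1}\big(J_h(\theta)D\nabla f(\theta)-\kappa h(\theta)\big),\qquad M(\theta,D)=J_h(\theta)DJ_h(\theta)^\top .
\end{align*}
I would show that $\Phi$ is Lipschitz on $\mathcal C\times\mathcal D$, where $\mathcal D$ is the set of diagonal matrices with $\frac{1}{G+\delta}I\preceq D\preceq\frac1\delta I$ (by Lemma~\ref{lem:metric-bounds-variation}, every $D_t$ lies in $\mathcal D$). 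The ingredients are as follows.
\begin{itemize}
\item[(i)] By Assumption~\ref{assump:smooth-compact}, $f,h$ are $C^2$ on a neighborhood of the compact set $\mathcal C$. Hence $\nabla f$, $h$ and $J_h$ are bounded and Lipschitz on $\mathcal C$. If the straight segment between iterates could leave $\mathcal C$, I would instead use the $C^1$ regularity of $\Phi$ on a compact neighborhood, or use a local Lipschitz estimate together with the fact that the steps are small.
\item[(ii)] Full row rank on compact $\mathcal C$ gives $\sigma_{\min}(J_h(\theta))\ge\sigma>0$. Hence $M\succeq\frac{\sigma^2}{G+\delta}I$, so $\|M^{-1}\|$ is uniformly bounded.
\item[(iii)] The identity $M_1^{-1}-M_2^{-1}=M_1^{-1}(M_2-M_1)M_2^{-1}$, combined with $M$ being Lipschitz in $(\theta,D)$, makes $M^{-1}$ Lipschitz.
\end{itemize}
Together these give a constant $L$ with
\begin{align*}
\|g_{k+1}-g_k\|\le L\big(\|\theta_{k+1}-\theta_k\|+\|D_{k+1}-D_k\|\big)\le L\Big(\tfrac{G}{\delta}\eta_k+\Delta_k\Big),
\end{align*}
where the last step uses $\theta_{k+1}-\theta_k=-\eta_kD_k\hat m_k$ and $\|D_k\hat m_k\|\le G/\delta$.

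\textbf{Step 3: summation.} Set $a_k:=L(\tfrac G\delta\eta_k+\Delta_k)$. Since the stepsizes are nonincreasing (Assumption~\ref{assump:stepsize}), $\eta_t\le\eta_k$ for $k<t$, so $\eta_t a_k\le L(\tfrac G\delta\eta_k^2+\eta_k\Delta_k)$. Then
\begin{align*}
\sum_{t=1}^T\eta_t\|e_t\|\le\sum_{k=1}^{T-1}L\Big(\tfrac G\delta\eta_k^2+\eta_k\Delta_k\Big)\sum_{t>k}\beta_1^{t-k}\le\frac{\beta_1L}{1-\beta_1}\max\Big(\tfrac G\delta,1\Big)\Big(\sum_{t=1}^T\eta_t^2+\sum_{t=1}^T\eta_t\Delta_t\Big).
\end{align*}
This gives the first claim with $C_e=\frac{\beta_1L}{1-\beta_1}\max(G/\delta,1)$. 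The second claim follows by bounding $\sum_t\eta_t\Delta_t\le\eta_1n/\delta$ using Lemma~\ref{lem:metric-bounds-variation}.
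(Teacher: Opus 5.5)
Your proposal is correct and follows essentially the same route as the paper: write $\hat m_t$ as a convex combination of past $g_s$, and bound differences by $\frac{G}{\delta}\eta_k+\Delta_k$ using Lipschitz continuity of $(\theta,D)\mapsto g$ on the compact admissible set. Then swap the order of summation and use $\eta_t\le\eta_k$ with a geometric series. The differences only improve the constant: you telescope into consecutive increments before applying the Lipschitz bound, you use the sharper weight bound $\beta_1^{t-k}$ instead of $\beta_1^{t-k}/(1-\beta_1)$, and your explicit $\max(G/\delta,1)$ factor fixes a step where the paper writes $L_g\frac{G}{\delta}\sum_k(\eta_k+\Delta_k)$, which implicitly assumes $G/\delta\ge 1$.
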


\begin{proof}
The bias-corrected first moment can be written as
\begin{align*}
    \hat m_t=\sum_{i=1}^t a_{t,i}g_i,\qquad a_{t,i}:=\frac{(1-\beta_1)\beta_1^{t-i}}{1-\beta_1^t}.
\end{align*}
The weights satisfy $a_{t,i}\ge0$ and $\sum_{i=1}^t a_{t,i}=1$. Hence
\begin{align*}
    e_t=\hat m_t-g_t=\sum_{i=1}^t a_{t,i}(g_i-g_t).
\end{align*}

By Assumption~\ref{assump:smooth-compact} and Lemma~\ref{lem:metric-bounds-variation}, the map $(\theta,D)\mapsto g_D(\theta)$ is Lipschitz on the compact admissible set
\begin{align*}
    \mathcal C\times \left\{D:\frac{1}{G+\delta}I\preceq D\preceq \frac{1}{\delta}I\right\}.
\end{align*}
Let $L_g>0$ be a corresponding Lipschitz constant. Then, for $i\le t$,
\begin{align*}
    \|g_i-g_t\|\le L_g\big(\|\theta_i-\theta_t\|+\|D_i-D_t\|\big).
\end{align*}
Since $\hat m_t$ is a convex combination of previous $g_i$'s and $\|g_i\|\le G$, we have $\|\hat m_t\|\le G$. Together with $\|D_t\|\le 1/\delta$, this gives
\begin{align*}
    \|\theta_{t+1}-\theta_t\|=\eta_t\|D_t\hat m_t\|\le \frac{G}{\delta}\eta_t.
\end{align*}
Therefore, for $i\le t$,
\begin{align*}
    \|\theta_i-\theta_t\|\le \frac{G}{\delta}\sum_{k=i}^{t-1}\eta_k,\qquad \|D_i-D_t\|\le \sum_{k=i}^{t-1}\Delta_k.
\end{align*}
Then
\begin{align*}
    \|g_i-g_t\|\le L_g\frac{G}{\delta}\sum_{k=i}^{t-1}(\eta_k+\Delta_k).
\end{align*}
Thus
\begin{align*}
    \|e_t\|
    &\le L_g\frac{G}{\delta}\sum_{i=1}^t a_{t,i}\sum_{k=i}^{t-1}(\eta_k+\Delta_k) \\
    &=L_g\frac{G}{\delta}\sum_{k=1}^{t-1}(\eta_k+\Delta_k)\sum_{i=1}^k a_{t,i}.
\end{align*}
Using the explicit form of $a_{t,i}$,
\begin{align*}
    \sum_{i=1}^k a_{t,i}=\frac{\beta_1^{t-k}-\beta_1^t}{1-\beta_1^t}\le \frac{\beta_1^{t-k}}{1-\beta_1}.
\end{align*}
Therefore,
\begin{align*}
    \|e_t\|\le \frac{L_gG}{(1-\beta_1)\delta}\sum_{k=1}^{t-1}\beta_1^{t-k}(\eta_k+\Delta_k).
\end{align*}
Multiplying by $\eta_t$ and summing from $t=1$ to $T$ yields
\begin{align*}
    \sum_{t=1}^T\eta_t\|e_t\|
    &\le \frac{L_gG}{(1-\beta_1)\delta}\sum_{t=1}^T\eta_t\sum_{k=1}^{t-1}\beta_1^{t-k}(\eta_k+\Delta_k) \\
    &=\frac{L_gG}{(1-\beta_1)\delta}\sum_{k=1}^{T-1}(\eta_k+\Delta_k)\sum_{t=k+1}^T\eta_t\beta_1^{t-k}.
\end{align*}
Since $\eta_t$ is nonincreasing, for $t\ge k+1$ we have $\eta_t\le \eta_k$. Hence
\begin{align*}
    \sum_{t=k+1}^T\eta_t\beta_1^{t-k}\le \eta_k\sum_{\ell=1}^{\infty}\beta_1^\ell=\eta_k\frac{\beta_1}{1-\beta_1}.
\end{align*}
Therefore,
\begin{align*}
    \sum_{t=1}^T\eta_t\|e_t\|
    &\le \frac{L_gG\beta_1}{(1-\beta_1)^2\delta}\sum_{k=1}^{T-1}\eta_k(\eta_k+\Delta_k) \\
    &\le C_e\left(\sum_{t=1}^T\eta_t^2+\sum_{t=1}^T\eta_t\Delta_t\right),
\end{align*}
where one may take
\begin{align*}
    C_e:=\frac{L_gG\beta_1}{(1-\beta_1)^2\delta}.
\end{align*}
The second claim follows immediately from Lemma~\ref{lem:metric-bounds-variation}. This proves the lemma.
\end{proof}

\begin{proof}[Proof of Theorem~\ref{thm:main}]
By Lemma~\ref{lem:metric-bounds-variation}, the matrices $D_t$ are uniformly positive definite and uniformly bounded. Since $J_h(\theta)$ has full row rank on $\mathcal C$, the matrices
\begin{align*}
    M_t=J_h(\theta_t)D_tJ_h(\theta_t)^\top
\end{align*}
are uniformly nonsingular. Therefore, $\lambda_t^\dagger$, $r_t$, and $g_t$ are all uniformly bounded on the compact admissible set. In particular, define
\begin{align*}
    G_f:=\sup_{\theta\in\mathcal C}\|\nabla f(\theta)\|,\qquad J_1:=\sup_{\theta\in\mathcal C}\|J_h(\theta)\|_{2\to1},\qquad \Lambda:=\sup_{t\ge1}\|\lambda_t^\dagger\|_\infty.
\end{align*}
All three constants are finite.

Let $L_f>0$ be a Lipschitz constant of $\nabla f$ on the neighborhood specified in Assumption~\ref{assump:smooth-compact}. Let $L_{h,1}>0$ satisfy
\begin{align*}
    \|h(\theta+s)-h(\theta)-J_h(\theta)s\|_1\le \frac{L_{h,1}}{2}\|s\|^2
\end{align*}
whenever the line segment between $\theta$ and $\theta+s$ lies in that neighborhood.

Let $e_t:=\hat m_t-g_t$. The update can be written as
\begin{align*}
    \theta_{t+1}=\theta_t-\eta_tD_tg_t-\eta_tD_te_t.
\end{align*}

We first estimate the objective decrease. By smoothness of $f$,
\begin{align*}
    f(\theta_{t+1})
    &\le f(\theta_t)+\left\langle \nabla f(\theta_t),\theta_{t+1}-\theta_t\right\rangle+\frac{L_f}{2}\|\theta_{t+1}-\theta_t\|^2 \\
    &= f(\theta_t)-\eta_t\left\langle \nabla f(\theta_t),D_tg_t\right\rangle-\eta_t\left\langle \nabla f(\theta_t),D_te_t\right\rangle+\frac{L_f}{2}\eta_t^2\|D_t\hat m_t\|^2.
\end{align*}
Using $\|\nabla f(\theta_t)\|\le G_f$, $\|D_t\|\le 1/\delta$, and $\|\hat m_t\|\le G$, we obtain
\begin{align}
    f(\theta_{t+1})\le f(\theta_t)-\eta_t\left\langle \nabla f(\theta_t),D_tg_t\right\rangle+\frac{\eta_tG_f}{\delta}\|e_t\|+\frac{L_fG^2}{2\delta^2}\eta_t^2.
    \label{eq:obj-descent-ams}
\end{align}

We now simplify the leading inner product. By definition,
\begin{align*}
    r_t=\nabla f(\theta_t)+J_h(\theta_t)^\top\lambda_t^\dagger,
\end{align*}
so
\begin{align*}
    \nabla f(\theta_t)=r_t-J_h(\theta_t)^\top\lambda_t^\dagger.
\end{align*}
Therefore,
\begin{align*}
    \left\langle \nabla f(\theta_t),D_tg_t\right\rangle
    &=\left\langle r_t,D_tg_t\right\rangle-\left\langle J_h(\theta_t)^\top\lambda_t^\dagger,D_tg_t\right\rangle \\
    &=\left\langle r_t,D_tg_t\right\rangle-\left\langle \lambda_t^\dagger,J_h(\theta_t)D_tg_t\right\rangle.
\end{align*}
We have $J_h(\theta_t)D_tr_t=0$ by construction. Moreover,
\begin{align*}
    g_t=r_t+\kappa J_h(\theta_t)^\top M_t^{-1}h(\theta_t),
\end{align*}
and hence
\begin{align*}
    \left\langle r_t,D_tg_t\right\rangle
    &=\left\langle r_t,D_tr_t\right\rangle+\kappa\left\langle r_t,D_tJ_h(\theta_t)^\top M_t^{-1}h(\theta_t)\right\rangle \\
    &=\left\langle r_t,D_tr_t\right\rangle+\kappa\left\langle J_h(\theta_t)D_tr_t,M_t^{-1}h(\theta_t)\right\rangle \\
    &=\left\langle r_t,D_tr_t\right\rangle.
\end{align*}
Also, the metric feedback-linearization identity gives
\begin{align*}
    J_h(\theta_t)D_tg_t=\kappa h(\theta_t).
\end{align*}
Thus
\begin{align*}
    \left\langle \nabla f(\theta_t),D_tg_t\right\rangle=\left\langle r_t,D_tr_t\right\rangle-\kappa{\lambda_t^\dagger}^\top h(\theta_t).
\end{align*}
Using $D_t\succeq \frac{1}{G+\delta}I$ and $\|\lambda_t^\dagger\|_\infty\le \Lambda$, we obtain
\begin{align}
    -\left\langle \nabla f(\theta_t),D_tg_t\right\rangle\le -\frac{1}{G+\delta}\|r_t\|^2+\kappa\Lambda\|h(\theta_t)\|_1.
    \label{eq:inner-bound-ams}
\end{align}
Combining \eqref{eq:obj-descent-ams} and \eqref{eq:inner-bound-ams} yields
\begin{align}
    f(\theta_{t+1})\le f(\theta_t)-\frac{\eta_t}{G+\delta}\|r_t\|^2+\eta_t\kappa\Lambda\|h(\theta_t)\|_1+\frac{\eta_tG_f}{\delta}\|e_t\|+\frac{L_fG^2}{2\delta^2}\eta_t^2.
    \label{eq:f-descent-final-ams}
\end{align}

Next, we estimate the constraint violation. By Taylor's theorem,
\begin{align*}
    h(\theta_{t+1})
    &=h(\theta_t)+J_h(\theta_t)(\theta_{t+1}-\theta_t)+\mathcal E_t \\
    &=h(\theta_t)-\eta_tJ_h(\theta_t)D_tg_t-\eta_tJ_h(\theta_t)D_te_t+\mathcal E_t \\
    &=(1-\kappa\eta_t)h(\theta_t)-\eta_tJ_h(\theta_t)D_te_t+\mathcal E_t,
\end{align*}
where
\begin{align*}
    \|\mathcal E_t\|_1\le \frac{L_{h,1}}{2}\|\theta_{t+1}-\theta_t\|^2.
\end{align*}
Since $\|\theta_{t+1}-\theta_t\|\le \eta_t\|D_t\hat m_t\|\le \eta_tG/\delta$, we have
\begin{align*}
    \|\mathcal E_t\|_1\le \frac{L_{h,1}G^2}{2\delta^2}\eta_t^2.
\end{align*}
Using $\eta_t\le1/\kappa$, we have $1-\kappa\eta_t\ge0$. Therefore,
\begin{align}
    \|h(\theta_{t+1})\|_1
    &\le (1-\kappa\eta_t)\|h(\theta_t)\|_1+\eta_t\|J_h(\theta_t)D_te_t\|_1+\frac{L_{h,1}G^2}{2\delta^2}\eta_t^2 \notag\\
    &\le (1-\kappa\eta_t)\|h(\theta_t)\|_1+\frac{\eta_tJ_1}{\delta}\|e_t\|+\frac{L_{h,1}G^2}{2\delta^2}\eta_t^2.
    \label{eq:h-descent-final-ams}
\end{align}

Multiplying \eqref{eq:h-descent-final-ams} by $\rho$ and adding it to \eqref{eq:f-descent-final-ams}, we obtain
\begin{align}
    \Psi_\rho(\theta_{t+1})
    &\le \Psi_\rho(\theta_t)-\frac{\eta_t}{G+\delta}\|r_t\|^2-\eta_t\kappa(\rho-\Lambda)\|h(\theta_t)\|_1 \notag\\
    &\quad+\frac{\eta_t(G_f+\rho J_1)}{\delta}\|e_t\|+\frac{(L_f+\rho L_{h,1})G^2}{2\delta^2}\eta_t^2.
    \label{eq:psi-descent-ams}
\end{align}
Choose $\rho>\Lambda$ and define
\begin{align*}
    c_*:=\min\left\{\frac{1}{G+\delta},\kappa(\rho-\Lambda)\right\},\qquad B_e:=\frac{G_f+\rho J_1}{\delta},\qquad B_\eta:=\frac{(L_f+\rho L_{h,1})G^2}{2\delta^2}.
\end{align*}
Then \eqref{eq:psi-descent-ams} implies
\begin{align}
    \Psi_\rho(\theta_{t+1})\le \Psi_\rho(\theta_t)-c_*\eta_t\left(\|r_t\|^2+\|h(\theta_t)\|_1\right)+B_e\eta_t\|e_t\|+B_\eta\eta_t^2.
    \label{eq:psi-descent-compact-ams}
\end{align}

Summing \eqref{eq:psi-descent-compact-ams} from $t=1$ to $T$, we get
\begin{align*}
    c_*\sum_{t=1}^T\eta_t\left(\|r_t\|^2+\|h(\theta_t)\|_1\right)
    &\le \Psi_\rho(\theta_1)-\Psi_\rho(\theta_{T+1})+B_e\sum_{t=1}^T\eta_t\|e_t\|+B_\eta\sum_{t=1}^T\eta_t^2.
\end{align*}
Since $\Psi_\rho$ is continuous on $\mathcal C$, it is bounded below. Let
\begin{align*}
    \underline\Psi:=\inf_{\theta\in\mathcal C}\Psi_\rho(\theta)>-\infty.
\end{align*}
Using Lemma~\ref{lem:momentum-error-amsgrad}, we obtain
\begin{align*}
    \sum_{t=1}^T\eta_t\|e_t\|\le C_e\left(\sum_{t=1}^T\eta_t^2+\frac{\eta_1n}{\delta}\right).
\end{align*}
Therefore,
\begin{align*}
    c_*\sum_{t=1}^T\eta_t\left(\|r_t\|^2+\|h(\theta_t)\|_1\right)
    &\le \Psi_\rho(\theta_1)-\underline\Psi+B_eC_e\left(\sum_{t=1}^T\eta_t^2+\frac{\eta_1n}{\delta}\right)+B_\eta\sum_{t=1}^T\eta_t^2.
\end{align*}
Thus
\begin{align}
    \sum_{t=1}^T\eta_t\left(\|r_t\|^2+\|h(\theta_t)\|_1\right)\le B_0+B_1\sum_{t=1}^T\eta_t^2,
    \label{eq:finite-time-main}
\end{align}
where
\begin{align*}
    B_0:=\frac{\Psi_\rho(\theta_1)-\underline\Psi+B_eC_e\eta_1n/\delta}{c_*},\qquad B_1:=\frac{B_eC_e+B_\eta}{c_*}.
\end{align*}
This proves the first claim.

Finally, let $S_T:=\sum_{t=1}^T\eta_t$. Since
\begin{align*}
    S_T\min_{1\le t\le T}\left(\|r_t\|^2+\|h(\theta_t)\|_1\right)\le \sum_{t=1}^T\eta_t\left(\|r_t\|^2+\|h(\theta_t)\|_1\right),
\end{align*}
dividing \eqref{eq:finite-time-main} by $S_T$ gives
\begin{align*}
    \min_{1\le t\le T}\left(\|r_t\|^2+\|h(\theta_t)\|_1\right)\le \frac{B_0+B_1\sum_{t=1}^T\eta_t^2}{S_T}.
\end{align*}
By Assumption~\ref{assump:stepsize}, $\sum_t\eta_t^2<\infty$. Hence
\begin{align*}
    \min_{1\le t\le T}\left(\|r_t\|^2+\|h(\theta_t)\|_1\right)=\mathcal O\left(\frac{1}{S_T}\right).
\end{align*}
This completes the proof.
\end{proof}

\newpage

\end{document}